\tikzstyle{every picture}+=[remember picture]
\newcommand*{\transpose}{\top} 
\icmltitlerunning{Partially Exchangeable Networks and ABC}
\begin{document}
\newtheorem{theorem}{Theorem}
\newtheorem{definition}{Definition}
\twocolumn[
\icmltitle{Partially Exchangeable Networks
           and Architectures for Learning Summary Statistics in Approximate Bayesian Computation}



\icmlsetsymbol{equal}{*}

\begin{icmlauthorlist}
\icmlauthor{Samuel Wiqvist$^*$}{lu}
\icmlauthor{Pierre-Alexandre Mattei$^*$}{itu}
\icmlauthor{Umberto Picchini}{gu}
\icmlauthor{Jes Frellsen}{itu}

\end{icmlauthorlist}

\icmlaffiliation{lu}{Centre for Mathematical Sciences, Lund University, Lund, Sweden}
\icmlaffiliation{itu}{Department of Computer Science, IT University of Copenhagen, Copenhagen, Denmark}
\icmlaffiliation{gu}{Department of Mathematical Sciences, Chalmers University of Technology\\ and the University of Gothenburg, Gothenburg, Sweden}

\icmlcorrespondingauthor{Jes Frellsen}{jefr@itu.dk}

\icmlkeywords{Approximate Bayesian computation, DeepSets, exchangeable data, summary statistics}

\vskip 0.3in

]



\printAffiliationsAndNotice{\icmlEqualContribution} 

\begin{abstract}

We present a novel family of deep neural architectures, named partially exchangeable networks (PENs) that leverage probabilistic symmetries. By design, PENs are invariant to block-switch transformations, which characterize the partial exchangeability properties of conditionally Markovian processes. Moreover, we show that any block-switch invariant function has a PEN-like representation. The DeepSets architecture is a special case of PEN and we can therefore also target fully exchangeable data. We employ PENs to learn summary statistics in approximate Bayesian computation (ABC). When comparing PENs to previous deep learning methods for learning summary statistics, our results are highly competitive, both considering time series and static models. Indeed, PENs provide more reliable posterior samples even when using less training data.

\end{abstract}

\section{Introduction}

We propose a novel neural network architecture to ease the application of approximate Bayesian computation (ABC), a.k.a.~\textit{likelihood-free} inference. The architecture, called partially exchangeable network (PEN), uses partial exchangeability in Markovian data, allowing us to perform ABC inference for time series models with Markovian structure. Empirically, we also show that we can target non-Markovian time series data with PENs. Since the DeepSets architecture \cite{zaheer2017deep} turns out to be a special case of PEN, we can also perform ABC inference for static models. Our work is about automatically construct summary statistics of the data that are informative for model parameters. This is a main challenge in the practical application of ABC algorithms, since such summaries are often \textit{handpicked} (i.e.~ad-hoc summaries are constructed from model domain expertise), or these are automatically constructed using a number of approaches as detailed in \cref{sec:abc}. Neural networks have been previously used to automatically construct summary statistics for ABC. \citet{jiang2017learning} and \citet{creel2017neural} employ standard multilayer perceptron (MLP) networks for learning the summary statistics. \citet{chan2018likelihood} introduce a network that exploits the exchangeability property in exchangeable data. Our PEN architecture is a new addition to the tools for automatic construction of summary statistics, and PEN produces competitive inference results compared to \citet{jiang2017learning}, which in turn was shown outperforming the semi-automatic regression method by \citet{fearnhead2012constructing}. Moreover, our PEN architecture is more data efficient and when reducing the training data PEN outperforms \citet{jiang2017learning}, the factor of reduction being of order $10$ to $10^2$ depending on cases.

Our \textbf{main contributions} are:

\begin{itemize}
    \item Introducing the partially exchangeable networks (PENs) architecture;
    \item Using PENs to automatically learn summary statistics for ABC inference. We consider both static and dynamic models. In particular, our network architecture is specifically designed to learn summary statistics for dynamic models.
\end{itemize}


\section{Approximate Bayesian computation} \label{sec:abc}

Approximate Bayesian computation (ABC) is an increasingly popular inference method for model parameters $\theta$, in that it only requires the ability to produce artificial data from a stochastic model \textit{simulator} \cite{beaumont2002approximate,marin2012approximate}.  A simulator is essentially a computer program, which takes $\theta$, makes internal calls to a random number generator, and outputs a vector of artificial data. The implication is that ABC can be used to produce approximate inference when the likelihood function $p(y|\theta)$ underlying the simulator is intractable. As such ABC methods have been applied to a wide range of disciplines \cite{sisson2018handbook}. The fundamental idea in ABC is to generate parameter proposals $\theta^{\star}$ and accept a proposal if the simulated data $y^{\star}$ for that proposal is similar to observed data $y^{\text{obs}}$. Typically this approach is not suitable for high-dimensional data, and a set of summary statistics of the data is therefore commonly introduced to break the \textit{curse-of-dimensionality}. So, instead of comparing $y^{\star}$  to $y^{\text{obs}}$, we compare summary statistics of the simulated data $s^{\star}=S(y^{\star})$ to those of observed data $s^{\text{obs}}=S(y^{\text{obs}})$. Then we accept the proposed $\theta^{\star}$ if $s^{\star}$ is close to $s^{\text{obs}}$ in some metric. Using this scheme, ABC will simulate draws from the following approximate posterior of $\theta$
\begin{equation*}
    p^{\epsilon}_{\text{ABC}}(\theta | s^{\text{obs}}) \propto \int K_\epsilon(\Delta(s^{\star},s^{\text{obs}}))p(s^{\star}|\theta)p(\theta)ds^{\star},
\end{equation*}
where $p(\theta)$ is the prior of $\theta$, $\Delta$ is a distance function between observed and simulated summaries (we use a Mahalanobis distance, see the supplementary material in Appendix \ref{sec:suppl_mat}), $K_\epsilon(\cdot)$ is a kernel, which in all our applications is the uniform kernel returning 1 if $\Delta(s^{\star},s^{\text{obs}})<\epsilon$ and 0 otherwise, and $\epsilon>0$ is the so-called ABC-threshold. A smaller $\epsilon$ produces more accurate approximations to the true summaries posterior $p(\theta | s^{\text{obs}})$, though this implies a larger computational effort due to the increasing number of rejected proposals. An additional issue is that ideally we would like to target $p(\theta | y^{\text{obs}})$, not $p(\theta | s^{\text{obs}})$, but again unless sufficient statistics are available (impossible outside the exponential family), and since $\epsilon>0$, we have to be content with samples from $p^{\epsilon}_{\text{ABC}}$.

In this work we do not focus on \textit{how} to sample from $p^{\epsilon}_{\text{ABC}}(\theta | s^{\text{obs}})$ (see \citealp{sisson2018handbook} for possibilities). Therefore, we employ the simplest (and also most inefficient) ABC algorithm, the so called ``ABC rejection sampling'' \cite{pritchard1999population}. We will use the ``reference table'' version of ABC rejection sampling (e.g.~\citealp{cornuet2008inferring}), which is as follows:
\begin{itemize}
    \item Generate $\tilde{N}$ independent proposals $\theta^i\sim p(\theta)$,  and corresponding data  $y^i\sim p(y|\theta^i)$ from the simulator;
    \item Compute the summary statistics $s^i = S(y^i)$ for each $i=1,...,\tilde{N}$;
    \item Compute the distances $\Delta(s^{i}, s^{\text{obs}})$ for each $i=1,...,\tilde{N}$;
    \item Retain proposals $\theta^i$ corresponding to those $\Delta(s^{i}, s^{\text{obs}})$ that are smaller than the $x$-th percentile of all distances.
\end{itemize}
The retained $\theta^i$'s form a sample from $p^{\epsilon}_{\text{ABC}}$ with $\epsilon$ given by the selected $x$th percentile.
An advantage of this approach is that it allows to easily compare the quality of the ABC inference based on several methods for computing the summaries, under the same computational budget $\tilde{N}$. Moreover, once the ``reference table'' $(\theta^i,y^i)_{1\leq i \leq \tilde{N}}$ has been produced in the first step, we can recycle these simulations to produce new posterior samples using several methods for computing the summary statistics. 

\subsection{Learning summary statistics}
Event though ABC rejection sampling is highly inefficient due to proposing parameters from the prior $p(\theta)$, this is not a concern for the purpose of our work.
In fact, our main focus is \textit{learning} the summary statistics $S(\cdot)$. This is perhaps the most serious difficulty affecting the application of ABC methodology to practical problems. In fact, we require summaries that are informative for $\theta$, as a replacement for the (unattainable) sufficient statistics. A considerable amount of research has been conducted on how to construct informative summary statistics (see \citealp{blum2013comparative} and \citealp{prangle2015summary} for an overview). However their selection is still challenging since no state-of-the-art methodology exists that can be applied to arbitrarily complex problems. \citet{fearnhead2012constructing} consider a regression-based approach where they also show that the best summary statistic, in terms of the minimal quadratic loss, is the posterior mean. The latter is however unknown since $p(\theta | y^{\text{obs}})$ itself is unknown. Therefore, they introduce a simulation approach based on a linear regression model
\begin{equation}
\theta_j^i = E(\theta_j | y^i) + \xi_j^i = b_{0_j} + b_j h(y^i) + \xi_j^i
\label{eq:linera_regerssion_model_fearnhead2012constructing}
\end{equation}
with $\xi_j^i$ some mean-zero noise. Here $j=1,...,\dim(\theta)$ and $h(y^i)$ is a vector of (non)-linear transformations of ``data'' $y^i$ (here $y^i$ can be simulated or observed data). Therefore \citet{fearnhead2012constructing} have $\dim(\theta)$ models to fit separately, one for each component of vector $\theta$. Of course, these fittings are to be performed \textit{before} ABC rejection is executed, so this is a step that anticipates ABC rejection, to provide the latter with suitable summary statistics. The parameters in each regression \eqref{eq:linera_regerssion_model_fearnhead2012constructing} are estimated by fitting the model by least squares to a new set of $N$ simulated data-parameter pairs ${(\theta^{i}, y^i)}_{1 \leq i \leq N}$ where, same as for ABC rejection, the $\theta^i$ are generated from $p(\theta)$ and the $y^i$ are generated from the model simulator conditionally on $\theta^i$. To clarify the notation: $N$ is the number of data-parameter pairs used to fit the linear regression model in \eqref{eq:linera_regerssion_model_fearnhead2012constructing}, while $\tilde{N}$ is the number of parameter-data pair proposals used in ABC rejection sampling. However the two sets of parameter-data pairs ${(\theta^{i}, y^i)}_{1 \leq i \leq N}$ and ${(\theta^{i}, y^i)}_{1 \leq i \leq \tilde{N}}$ are different since these serve two separate purposes. They are generated in the same way but independently of each other.  After fitting \eqref{eq:linera_regerssion_model_fearnhead2012constructing}, estimates $(\hat{b}_{0_j},\hat{b}_j)$ are returned and $\hat{b}_{0_j} + \hat{b}_j h(y)$ is taken as $j$th summary statistic, $j=1,...,\dim(\theta)$. We can then take $S_j(y^{\text{obs}}) =  \hat{b}_{0_j} + \hat{b}_j h(y^{\text{obs}})$ as $j$th component of $S(y^{\text{obs}})$, and similarly take $S_j(y^{\star}) = \hat{b}_{0_j} + \hat{b}_j h(y^{\star})$. The number of summaries is therefore equal to the size of $\theta$.

This approach is further developed in \citet{jiang2017learning} where a MLP  deep neural network regression model is employed, and replaces the linear regression model in \eqref{eq:linera_regerssion_model_fearnhead2012constructing}. Hence, \citet{jiang2017learning} has the following regression model
\begin{equation*}
\theta^i = E(\theta|y^i) + \xi^i = f_{\beta}(y^i) + \xi^i
\label{eq:linera_regerssion_model_dnn}
\end{equation*}
where $f_{\beta}$ is the MLP parametrized by the weights $\beta$. \citet{jiang2017learning} estimate $\beta$ from
\begin{equation} \label{eq:minimization_problem_dnn}
\min_{\beta}  \frac{1}{N} \sum_{i=1}^{N} \| f_{\beta}(y^i) - \theta^{i} \|^{2}_{2},
\end{equation}
where ${(\theta^{i}, y^i)}_{1\leq i \leq N}$ are the parameter-data pairs that the network $f_{\beta}$ is fitted to.
\ifx
After fitting the regression model we have that $S(y) = f_{\hat{\beta}}(y)$ is the summary statistic. The full scheme for running ABC in \citet{jiang2017learning} is as follows:

\begin{itemize}
    \item Generate a large number $N$ of parameter-data pairs ${(\theta^{i}, y^i)}_{1 \leq i \leq N}$, by sampling $\theta^{i}$ from the prior, and then simulate corresponding data set $y^{i}$ from the simulator;
    \item Train $f_{\beta}$ on the generated set of parameters-data;
    \item Run ABC where the network $f_{\hat{\beta}}$ is used to provide summary statistics.
\end{itemize}
\fi

The deep neuronal network with  multiple  hidden  layers considered in \citet{jiang2017learning}  offers  stronger  representational  power to approximate $E(\theta|y)$ (and hence learn an informative summary statistic),  compared  to using linear regression, if the posterior mean is a highly non-linear function of $y$. Moreover, experiments in \citet{jiang2017learning} show that indeed their MLP outperforms the linear regression approach in \citet{fearnhead2012constructing} (at least for their considered experiments), although at the price of a much larger computational effort. For this reason in our experiments we compare ABC coupled with PENs with the ABC MLP from \citet{jiang2017learning}.

In \citet{creel2017neural} a deep neural network regression model is used. He also introduces a pre-processing step such that instead of feeding the network with the data set $y^{\mathrm{obs}}$, the network is fed with a set of statistics of the data $s^{\mathrm{obs}}$. This means that, unlike in \citet{jiang2017learning}, in \citet{creel2017neural} the statistician must already know ``some kind'' of initial summary statistics, used as input, and then the network returns another set of summary statistics as output, and the latter are used for ABC inference. Our PENs do not require any initial specification of summary statistics.

\section{Partially exchangeable networks} \label{sec:pens}



Even though the likelihood function is intractable in the likelihood-free setting, we may still have insights into properties of the data generating process. To that end, given our data set $y \in \mathcal{Y}^M$ with $M$ units, we will exploit some of the invariance properties of its prior predictive distribution $p(y)=\int_\theta p(y|\theta)p(\theta)d\theta$. As discussed in \cref{sec:abc}, the regression approach to ABC \citep{fearnhead2012constructing} involves to learn the regression function $y \mapsto E(\theta | y)$, where $E(\theta | y)$ is the posterior mean. Our goal in this section is to leverage the invariances of the Bayesian model $p(y)$ to design deep neural architectures that are fit for this purpose.

\subsection{Exchangeability and partial exchangeability}

The simplest form of model invariance is \emph{exchangeability}. A model $p(y)$ is said to be exchangeable if, for all permutations $\sigma$ in the symmetric group $S_M$, $p(y)=p(y_{\sigma(1)},...,y_{\sigma(M)})$. For example, if the observations are independent and identically distributed (i.i.d.)~given the parameter, then $p(y)$ is exchangeable. A famous theorem of \citet{definetti1929}, which was subsequently generalized in various ways (see e.g.~the review of \citealp{diaconis1988}), remarkably shows that such conditionally i.i.d.~models are essentially the only exchangeable models.

If the model is exchangeable, it is clear that the function $y \mapsto E(\theta | y)$ is permutation invariant. It is therefore desirable that a neural network used to approximate this function should also be permutation invariant. The design of permutation invariant neural architectures has been the subject of numerous works, dating at least back to \citet[Chap. 2]{minsky1988perceptrons} and \citet{shawe1989}. A renewed interest in such architectures came about recently, notably through the works of \citet{ravanbakhsh2017deep}, \citet{zaheer2017deep}, and \citet{murphy2018janossy}---a detailed overview of this rich line of work can be found in \citet{bloem2019}. Most relevant to our work is the DeepSets architecture of \citet{zaheer2017deep} that we generalize to partial exchangeability, and the approach of \citet{chan2018likelihood}, who used permutation invariant networks for ABC.

However, the models considered in ABC are arising from intractable-likelihoods scenarios, which certainly are not limited to exchangeable data, quite the opposite, e.g.~stochastic differential equations \cite{picchini2014inference}, state-space models and beyond \cite{jasra2015approximate}. To tackle this limitation, we ask: \emph{could we use a weaker notion of invariance to propose deep architectures suitable for such models?} In this paper, we answer this question for a specific class of non-i.i.d.~models: Markov chains. To this end, we make use of the notion of \emph{partial exchangeability} studied by \citet{diaconis1980}. This property can be seen as a weakened version of exchangeability where $p(y)$ is only invariant to a subset of the symmetric group called \emph{block-switch transformations}. Informally, for $d \in \mathbb{N}$, a $d$-block-switch transformation interchanges two given disjoint blocks of $y \in \mathcal{Y}^M$ when these two blocks start with the same $d$ symbols and end with the same $d$ symbols.
\begin{definition}[\textbf{Block-switch transformation}]
For increasing indices $b=(i,j,k,l) \in \{0,\ldots,M \}^4$ such that $j-i\geq d$ and $l-k\geq d$, the \textbf{$d$-block-switch transformation} $T^{(d)}_b$ is defined as follows: if $y_{i:(i+d)}=y_{k:(k+d)}$ and $y_{(j-d):j}=y_{(l-d):l}$ then
\begin{align}
\everymath{\displaystyle}
y &= y_{1:i-1}
\tikz[baseline]{\node[fill=red!20,anchor=base] (b1ij) {$y_{i:j}$};}
y_{(j+1):(k-1)}
\tikz[baseline]{\node[fill=blue!20,anchor=base] (b1kl) {$y_{k:l}$};}
y_{(l+1):M} \\
T^{(d)}_b(y) &= y_{1:i-1}
\tikz[baseline]{\node[fill=blue!20,anchor=base] (b2kl) {$y_{k:l}$};}
y_{(j+1):(k-1)}
\tikz[baseline]{\node[fill=red!20,anchor=base] (b2ij) {$y_{i:j}$};}
y_{(l+1):M}.
\begin{tikzpicture}[overlay]
        \path[->,draw=red,shorten >=7.5pt,shorten <=7.5pt] (b1ij) edge  (b2ij);
        \path[->,draw=blue,shorten >=7.5pt,shorten <=7.5pt] (b1kl) edge  (b2kl);
\end{tikzpicture}%
\end{align}
If $y_{i:(i+d)} \neq y_{k:(k+d)}$ or $y_{(j-d):j} \neq y_{(l-d):l}$ then the block-switch transformation leaves $y$ unchanged: $T^{(d)}_b(y)=y$.
\end{definition}
\begin{definition}[\textbf{Partial exchangeability}] Let $A$ be a metric space.
A function $F: \mathcal{Y}^M \rightarrow A$ is said to be \textbf{$d$-block-switch invariant} if $F(y) = F(T_b^{(d)}(y))$ for all $y \in \mathcal{Y}$ and for all $d$-block-switch transformations $T_b^{(d)}$. Similarly, a model $p(y)$ is \textbf{$d$-partially exchangeable} if for all $d$-block-switch transformations $T_b^{(d)}$ we have $p(y)=p(T_b^{(d)}(y))$.
\end{definition}
Note that $0$-partial exchangeability reduces to exchangeability and that all permutations are $0$-block-switch transformations.

It is rather easy to see that, if $p(y|\theta)$ is a Markov chain of order $d$, then $p(y)$ is partially exchangeable (and therefore $y \mapsto E(\theta | y)$ is $d$-block-switch invariant). In the limit of infinite data sets, \citet{diaconis1980} showed that the converse was also true: any partially exchangeable distribution is conditionally Markovian. This result, which is an analogue of de Finetti's theorem for Markov chains, justifies that \emph{partial exchangeability is the right symmetry to invoke when dealing with Markov models}.

\subsection{From model invariance to network architecture}

When dealing with Markovian data, we therefore wish to model a regression function $y \mapsto E(\theta | y)$ that is $d$-block-switch invariant. Next theorem gives a general functional representation of such functions, in the case where $\mathcal{Y}$ is countable.

\begin{theorem}\label{th:decomposition}
Let $F: \mathcal{Y}^M \rightarrow A$ be $d$-block-switch invariant. If $\mathcal{Y}$ is countable, then there exist two functions  $\phi :\mathcal{Y}^{d+1}   \rightarrow  \mathbb{R}$ and $\rho :\mathcal{Y}^d \times \mathbb{R} \rightarrow  A$ such that
\begin{equation}
    \forall y \in \mathcal{Y}^M, \; F(y) = \rho \left(y_{1:d}, \sum_{i=1}^{M-d} \phi \left(y_{i:(i+d)} \right) \right).
    \label{eq:decomposition}
\end{equation}
\end{theorem}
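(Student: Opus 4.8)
The plan is to mimic the now-classical DeepSets argument of \citet{zaheer2017deep}, but with the bookkeeping adapted to $d$-block-switch invariance rather than full permutation invariance. The key observation is that a $d$-block-switch transformation lets us rearrange the overlapping $(d+1)$-windows $y_{i:(i+d)}$ of $y$, subject to keeping a consistent ``boundary'' $y_{1:d}$ fixed, so the natural invariant object attached to $y$ is the pair consisting of the prefix $y_{1:d}$ together with the \emph{multiset} $\{\!\{ y_{i:(i+d)} : 1 \le i \le M-d \}\!\}$ of overlapping windows (equivalently, for a Markov chain, the initial state and the matrix of transition counts). So the first step is to define $\phi$ so that $\sum_{i=1}^{M-d}\phi(y_{i:(i+d)})$ is an injective encoding of that multiset, and then define $\rho$ on the range of $(y_{1:d}, \sum_i \phi(\cdot))$ by reading off $F$ of any preimage; $d$-block-switch invariance of $F$ is exactly what makes this well-defined.

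In more detail: since $\mathcal{Y}$ is countable, fix an injection $c:\mathcal{Y}^{d+1}\to\mathbb{N}$ and set $\phi(w) = 2^{-c(w)}$, or more robustly, since $M$ is fixed and bounded, use a base-$(M+1)$ encoding $\phi(w) = (M+1)^{-c(w)}$ so that the sum $\sum_{i=1}^{M-d}\phi(y_{i:(i+d)})$ records, in its ``digits,'' exactly how many times each window $w$ occurs among $y_{1:(1+d)},\ldots,y_{(M-d):M}$. This makes $w \mapsto \sum_i \phi(y_{i:(i+d)})$ an injection from multisets of size $M-d$ over $\mathcal{Y}^{d+1}$ into $\mathbb{R}$. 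For the second step I must check that the pair $(y_{1:d}, \text{multiset of windows})$ determines $F(y)$: this is where \cref{th:decomposition}'s hypothesis is used. Concretely, I would argue that if two sequences $y,y'$ share the same prefix of length $d$ and the same multiset of $(d+1)$-windows, then $y'$ is reachable from $y$ by a finite sequence of $d$-block-switch transformations, hence $F(y)=F(y')$; then $\rho(z_1, z_2)$ can be \emph{defined} as $F(y)$ for any $y$ with $y_{1:d}=z_1$ and $\sum_i\phi(y_{i:(i+d)})=z_2$ (and arbitrarily, e.g. $\rho \equiv$ some fixed element of $A$, off the range), and \eqref{eq:decomposition} holds by construction.

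The main obstacle is precisely that reachability claim---showing that two sequences with the same $d$-prefix and the same window-multiset are connected by block-switches. This is a combinatorial statement about Eulerian-type paths in the de Bruijn-like graph whose vertices are $d$-tuples and whose edges are the $(d+1)$-windows: the window-multiset fixes the edge multiset, the prefix fixes the start vertex, and (because each window also ends with a $d$-tuple) the multiset actually forces the end vertex too, so both $y$ and $y'$ are Eulerian trails with the same endpoints in the same multigraph; one then shows any two such trails differ by ``transpositions'' that are exactly $d$-block-switches (the BEST-theorem / trail-rotation style argument, where a $d$-block-switch with $y_{i:(i+d)}=y_{k:(k+d)}$, $y_{(j-d):j}=y_{(l-d):l}$ swaps two sub-trails that begin and end at the same pair of vertices). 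I would isolate this as a lemma and prove it by induction on $M$, peeling off the first edge and using the inductive hypothesis on the remaining trail, being slightly careful about the case where a needed swap is trivial. Everything else---countability giving the encoding $\phi$, well-definedness of $\rho$, and verifying the identity---is routine once the lemma is in hand.
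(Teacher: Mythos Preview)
Your argument is correct and is essentially the paper's proof: encode the multiset of $(d{+}1)$-windows by $\sum_i \phi(y_{i:(i+d)})$ with $\phi=\text{base}^{-c(\cdot)}$ for an injection $c:\mathcal{Y}^{d+1}\to\mathbb{N}$, and then use that the pair $(\text{prefix},\text{window-multiset})$ determines the block-switch equivalence class to define $\rho$. Two small differences are worth noting: the paper does not prove your ``reachability'' lemma via the Eulerian-trail argument but simply cites \citet[Proposition~27]{diaconis1980}; and the paper fixes $\phi(w)=2^{-c(w)}$, whereas your base-$(M{+}1)$ variant is the safer choice, since with base $2$ repeated windows produce carries and the appeal to ``uniqueness of finite binary representations'' does not go through verbatim.
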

\begin{proof}
Let $\sim$ be the equivalence relation over $\mathcal{Y}^M$ defined by $$x\sim y \iff \exists b_1,\ldots,b_k, \; \;   y = T_{b_1}^{(d)} \circ \cdots \circ T_{b_k}^{(d)} (x).$$ 
Let $\text{cl}:\mathcal{Y}^M \rightarrow \mathcal{Y}^M/{\sim}$ be the projection over the quotient set. According to the properties of the quotient set, since $F$ is $d$-block-switch invariant, there exists a unique function $g: \mathcal{Y}^M/{\sim} \rightarrow A$ such that $F = g \circ \text{cl}$.

Since $\mathcal{Y}$ is countable, $\mathcal{Y}^{d+1}$ is also countable and there exists an injective function $c: \mathcal{Y}^{d+1} \rightarrow \mathbb{N}$. Consider then the function
$$\nu: y \mapsto \left(y_{1:d},\sum_{i=1}^{M-d} 2^{-c(y_{i:(i+d)})} \right),$$
which is clearly $d$-block-switch invariant. There exists a unique function $h: \mathcal{Y}^M/{\sim} \rightarrow \nu(\mathcal{Y}^M)$ such that $\nu =  h \circ \text{cl}.$

We will now show that $h$ is a bijection. By construction, $h$ is clearly surjective. Let us now prove its injectivity. We thus have to show that, for all $x,y \in \mathcal{Y}^M$, $\nu(x)=\nu(y)$ implies $x \sim y$. Let $x,y \in \mathcal{Y}^M$ such that $\nu(x)=\nu(y)$. We have therefore $x_{1:d} = y_{1:d}$ and $$\sum_{i=1}^{M-d} 2^{-c(x_{i:(i+d)})}= \sum_{i=1}^{M-d} 2^{-c(y_{i:(i+d)})}.$$
The uniqueness of finite binary representations then implies that $\{x_{i:(i+d)}\}_{i\leq M-d}=\{y_{i:(i+d)}\}_{i\leq M-d}$. According to \citet[Proposition 27]{diaconis1980}, those two conditions imply that $x \sim y$, which shows that $h$ is indeed injective.

Since $h$ is a bijection, $\nu =  h \circ \text{cl}$ implies that $\text{cl} = h^{-1} \circ \nu$ which leads to $F = g \circ h^{-1} \circ \nu$. Finally, expanding this gives
$$ \forall y \in \mathcal{Y}^M, \; F(y) = g \circ h^{-1} \left(y_{1:d}, \sum_{i=1}^{M-d} 2^{-c(y_{i:(i+d)})}  \right),$$
which is the desired form with $\phi(y) = 2^{-c(y)}$ and $\rho = g \circ h^{-1}$.
\end{proof}

When $d=0$, the representation reduces to
\begin{equation}
     F(y) = \rho \left( \sum_{i=1}^{M} \phi \left(y_{i} \right) \right),
\end{equation}
and we exactly recover Theorem 2 from \citet{zaheer2017deep}---which also assumes countability of $\mathcal{Y}$---and the DeepSets representation. While an extension of our theorem to the uncountable case is not straightforward, we conjecture that a similar result holds even with uncountable $\mathcal{Y}$. A possible way to approach this conjecture is to study the very recent and fairly general result of \citet{bloem2019}. We note that the experiments on an autoregressive time series model in \cref{sec:AR2}, which is a Markovian process, support this conjecture.

\paragraph{Partially exchangeable networks}
The result in \cref{th:decomposition} suggests how to build $d$-block-switch invariant neural networks: we replace the functions $\rho$ and $\phi$ in \cref{eq:decomposition} by feed forward neural networks and denote this construction a $d$-partially exchangeable network (PEN-$d$ or PEN of order $d$). In this construction, we will call $\phi$ the \emph{inner network}, which maps a $d$-length subsequence $y_{i:i+d}$ into some representation $\phi(y_{i:i+d})$, and $\rho$ is the \emph{outer network} that maps the first $d$ symbols of the input, and the sum of the representations of all $d$-length subsequences of the input, to the output. We note that DeepSets networks are a special case of the PENs that corresponds to PEN-0.

\subsection{Using partially exchangeable networks for learning summary statistics for ABC}\label{sec:PENABC}


While PENs can by used for any exchangeable data, in this paper we use it for learning summary statistics in ABC. In particular, we propose the following regression model for learning the posterior mean
\begin{equation*}
       \theta^i = E(\theta|y^i) + \xi^i = \rho_{\beta_{\rho}}\biggl(y^i_{1:d}, \sum_{l = 1}^{M-d}  \phi_{\beta_{\phi}}(y^i_{l:l+d})\biggr) + \xi^i.
\end{equation*}
Here $\beta_{\phi}$ are the weights for the inner network, and $\beta_{\rho}$ are the weights for the outer network that maps its arguments into the posterior mean of the unknown parameters, which is the ABC summary we seek. When using PENs to learn the summary statistics we obtain the weights for the networks using the same criterion as in \cref{eq:minimization_problem_dnn}, except that instead of using the MLP network we use a PEN network for the underlying regression problem.

When targeting static models we employ a PEN-0, i.e.~a DeepSets network, since a static model can be viewed as a zero-order Markov model. For time series models we use a PEN-$d$, where $d>0$ is the order of the assumed data generating Markov process.

\section{Experiments} \label{sec:experiments}

We present four experiments: two static models (g-and-k and $\alpha$-stable distributions), and two time series models (autoregressive and moving average models). Full specification of the experimental settings is provided as supplementary material. The code was written in Julia 1.0.0 \cite{bezanson2017julia} and the framework Knet \cite{yuret2016knet} was used to build the deep learning models. The code can be found at \href{https://github.com/SamuelWiqvist/PENs-and-ABC}{https://github.com/SamuelWiqvist/PENs-and-ABC}. All experiments are simulation studies and the data used can be generated from the provided code. We compare approximate posteriors to the true posteriors using the Wasserstein distance, which we compute via the POT package \citep{flamary2017pot}. This distance can be sensitive to the number of posterior samples used, however, we observed that our results are fairly robust to variations in the number of samples. In all experiments we used 100 posterior samples to estimate the Wasserstein distance, except for the AR2 model where we used 500 samples.
We also employ two different MLP networks: ``MLP small'', where we use approximately the same number of weights as for the PEN-$d$ network; and ``MLP large'', which has a larger number of weights than PEN-$d$.

\subsection{g-and-k distribution} \label{sec:gandk}

The g-and-k distribution is defined by its quantile function via four parameters, and not by its probability density function since the latter is unavailable in closed form. This means that the likelihood function is ``intractable'' and as such exact inference is not possible. However, it is very simple to simulate draws from said distribution (see the supplementary material in Appendix \ref{sec:suppl_mat}), which means that g-and-k models are often used to test ABC algorithms \cite{prangle2017gk}.

The unknown parameters are $\theta = [A,B,g,k]$ (for full specification of the g-and-k distribution, see the supplementary material in Appendix \ref{sec:suppl_mat}). The prior distributions are set to $p(A) \sim \Gamma(2,1) $, $p(B) \sim \Gamma(2,1)$, $p(g) \sim \Gamma(2,0.5)$, and $p(k) \sim \Gamma(2,1)$ ($\Gamma(\alpha, \beta)$ is the Gamma distribution with shape parameter $\alpha$ and rate parameter $\beta$). We perform a simulation study with ground-truth parameters $A = 3$, $B = 1$, $g = 2$, $k = 0.5$ (same ground-truth parameter values as in \citealp{allingham2009bayesian}, \citealp{picchini2017approximate}, \citealp{fearnhead2012constructing}). Our data set comprises $M=1,000$ realizations from a g-and-k distribution.

We compare five different methods of constructing the summary statistics for ABC: (i) the handpicked summary statistics in \citet{picchini2017approximate}, i.e. $S(y) = [P_{20}, P_{40}, P_{60}, P_{80}, \text{skew}(y)]$ ($P_{i}$ is the $i$th percentile and $\text{skew}(y)$ is the skewness); (ii) ``MLP small''; (iii) ``MLP large''; (iv) a MLP network with a preprocessing step, denoted ``MLP pre'', where we feed the network with the empirical distribution function of the data instead of feeding it with the actual data; and (v) PEN-0 (DeepSets) since the data is i.i.d. the order of the Markov model is 0).

The probability density function for the g-and-k distribution can be approximated via finite differences, as implemented in the \texttt{gk} R package \cite{prangle2017gk}. This allow us to sample from an almost exact posterior distribution using standard Markov chain Monte Carlo (MCMC). We evaluate the inference produced using summaries constructed from the five methods (i--v) by comparing the resulting ABC posteriors to the ``almost exact'' posterior (computed using MCMC). ABC inferences are repeated over 100 independent data sets, and for a different number of training data observations for DNN models.
The results are presented in \cref{fig:res_gandk} and we can conclude that PEN-0 generates the best results. Furthermore, PEN-0 is also more data efficient since it performs considerably better than other methods with limited number of training observations. It seems in fact that PEN-0 requires 10 times less training data than ``MLP pre'' to achieve the same inference accuracy.
However all methods performed poorly when too few training observations are used.
The results also show that when MLP is fed with the observations it generates poor results, but if we instead use ``MLP pre'' and send in the empirical distribution function, in the spirit of \citet{creel2017neural}, we obtain considerably better results.

\begin{figure}[ht]
\vskip 0.2in
\begin{center}
\centerline{\includegraphics[width=1\columnwidth]{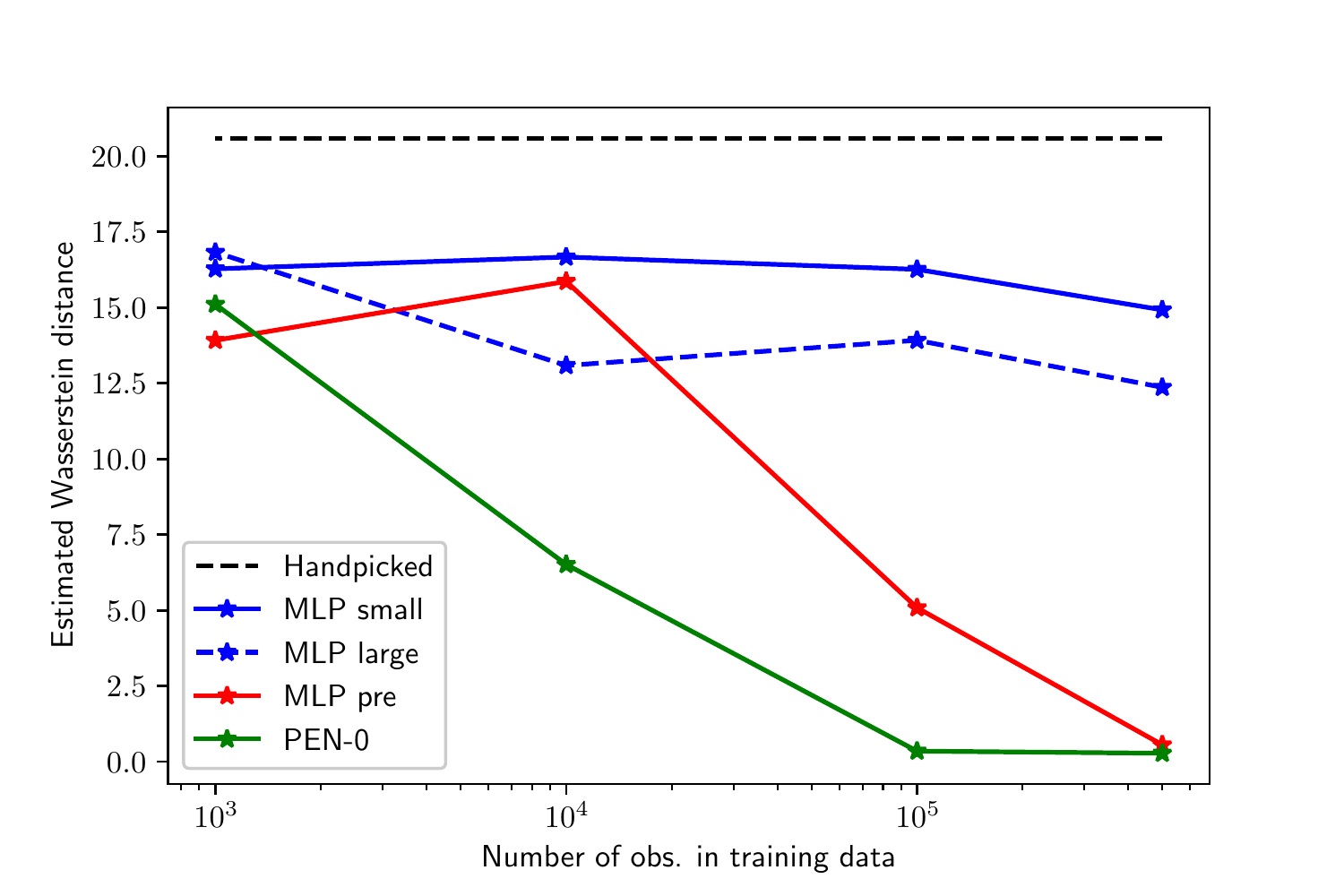}}
\caption{Results for g-and-k distribution: The estimated Wasserstein distances (mean over 100 repetitions) when comparing the MCMC posterior with ABC posteriors.}
\label{fig:res_gandk}
\end{center}
\vskip -0.2in
\end{figure}

\subsection{$\alpha$-stable distribution} \label{sec:alphstable}

The $\alpha$-stable is a heavy-tailed distribution defined by its characteristic function (see supplementary material in Appendix \ref{sec:suppl_mat}). Its probability density function is intractable and inference is therefore challenging. Bayesian methods for the parameters can be found in e.g. \citet{peters2012likelihood} and \citet{ong2018variational}. Unknown parameters are $\theta = [\alpha, \beta, \gamma, \delta]$. We follow \citet{ong2018variational} and transform the parameters:
\begin{equation*}
    \tilde{\alpha} = \log \frac{\alpha-1.1}{2-\alpha}, \ \tilde{\beta} = \log \frac{\beta+1}{1-\beta}, \ \tilde{\gamma} = \log \gamma, \text{and} \ \tilde{\delta} = \delta.
\end{equation*}
This constraints the original parameters to $\alpha \in [1.1,2]$, $\beta \in [-1,1]$, and $\gamma > 0$. Independent Gaussian priors and ground-truth parameters  are as in \citet{ong2018variational}: $\tilde{\alpha}, \tilde{\beta}, \tilde{\gamma}, \tilde{\delta} \sim N(0,1)$; ground-truth values for the untransformed parameters are: $\alpha = 1.5$, $\beta = 0.5$, $\gamma = 1$, and $\delta = 0$. Observations consist of $M=1,000$ samples.

We compare methods for computing summary statistics as we did in \cref{sec:gandk} for the g-and-k distribution. However, since here the true posterior distribution is unavailable, we evaluate the different methods by comparing the root-mean square error (RMSE) between ground-truth parameter values and the ABC posterior means, see \cref{tab:alphastablermse}. From \cref{tab:alphastablermse} we conclude that PEN-0 performs best in terms of RMSE. Similarly to the g-and-k example we also see that ``MLP pre'' (see \cref{sec:gandk} for details) performs considerably better than MLP. 
We now look at the resulting posteriors. In \cref{fig:alphastableposteriors} five posteriors from five independent experiments are presented (here we have used $5 \cdot 10^5$ training data observations). Inference results when using handpicked summary statistics are poor and for $\tilde{\gamma}$ the posterior resembles the prior. Posterior inference is worst for ``MLP large''. Results for ``MLP pre'' and PEN-0 are similar, at least in the case depicted in \cref{fig:alphastableposteriors} where we use $5 \cdot 10^5$ training data observations. However, in terms of RMSE, PEN-0 returns the best results when we reduce the number of training data observations. 

\begin{table}[ht]
\caption{Results for $\alpha$-stable distribution. Root-mean square error (RMSE) when comparing posterior means to the ground-truth parameters (over 25 repetitions), for different methods of computing the summary statistics, and different number of training observations (between brackets).}
\label{tab:alphastablermse}
\vskip 0.15in
\begin{center}
\begin{small}
\begin{sc}
\resizebox{\columnwidth}{!}{
\begin{tabular}{lcccccc}
\toprule
{} &  Handpicked & MLP (small) & MLP (large) & MLP pre &  PEN-0  \\
\midrule
RMSE ($5\cdot10^5$) & 0.64   & 0.18 & 0.15 & 0.07 & 0.05 \\
RMSE ($10^5$)  &  0.64 & 0.19 & 0.17 & 0.07 & 0.06 \\
RMSE ($10^4$)  &  0.64 & 0.21 & 0.37 & 0.07 & 0.06\\
RMSE ($10^3$) & 0.64 & 0.72 & 0.62 & 0.40 & 0.07 \\
\bottomrule
\end{tabular}
}
\end{sc}
\end{small}
\end{center}
\vskip -0.1in
\end{table}

\begin{figure}[ht]
\vskip 0.2in
\centering

\subfigure[$\tilde{\alpha}$ (Handpicked) ]{\includegraphics[width=.18\columnwidth]{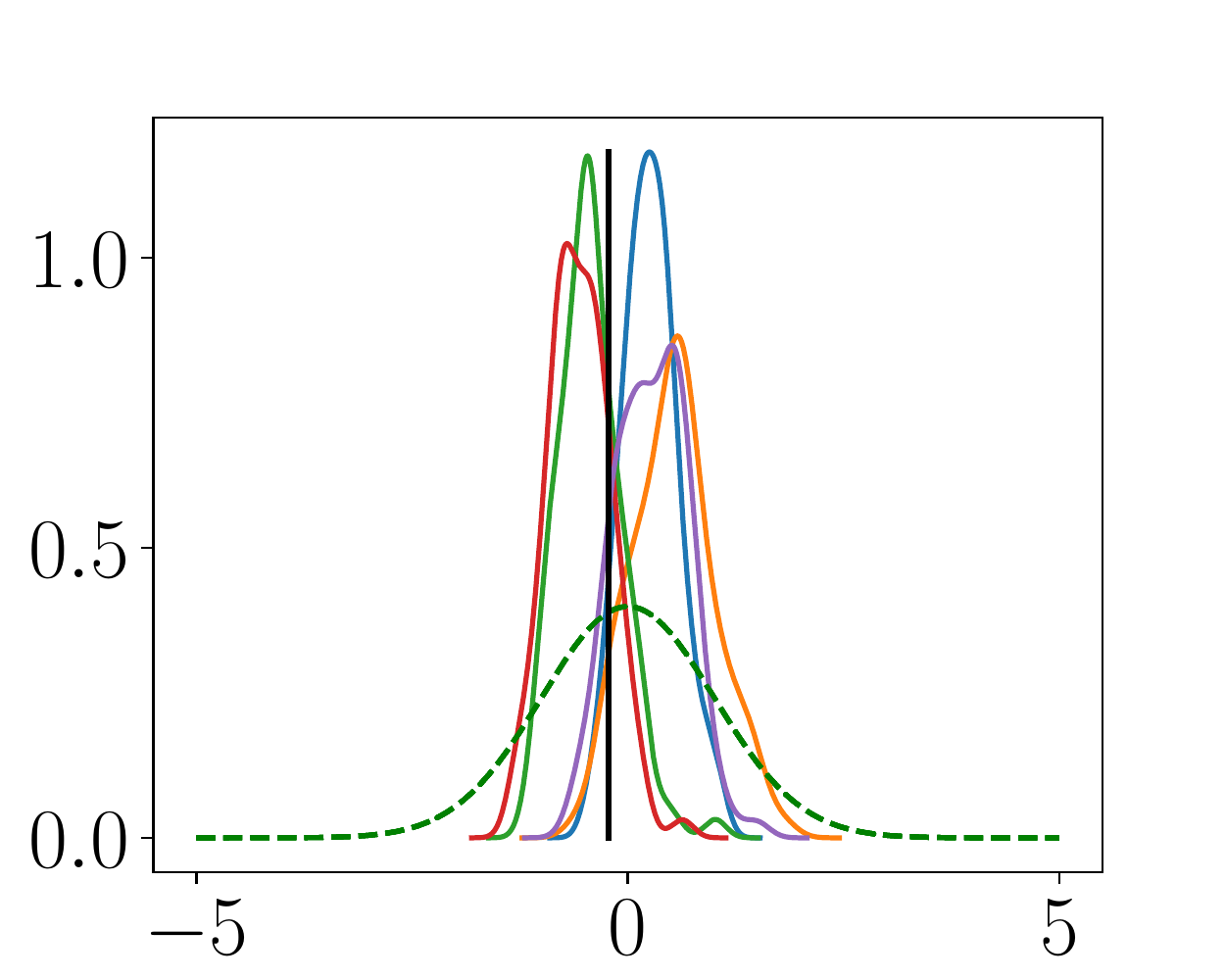}}\qquad
\subfigure[$\tilde{\alpha}$ (MLP large)]{\includegraphics[width=.18\columnwidth]{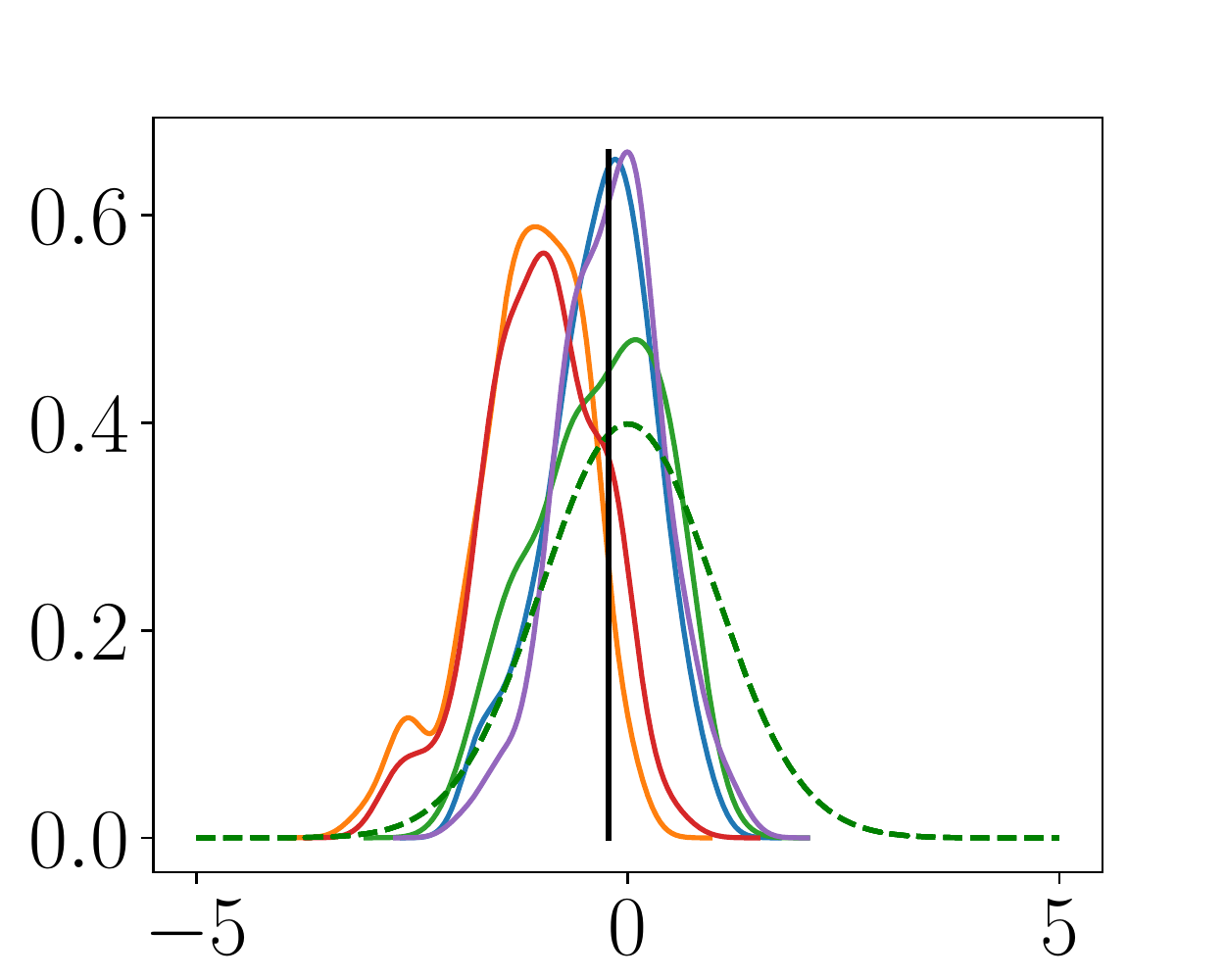}}\qquad
\subfigure[$\tilde{\alpha}$ (MLP pre)]{\includegraphics[width=.18\columnwidth]{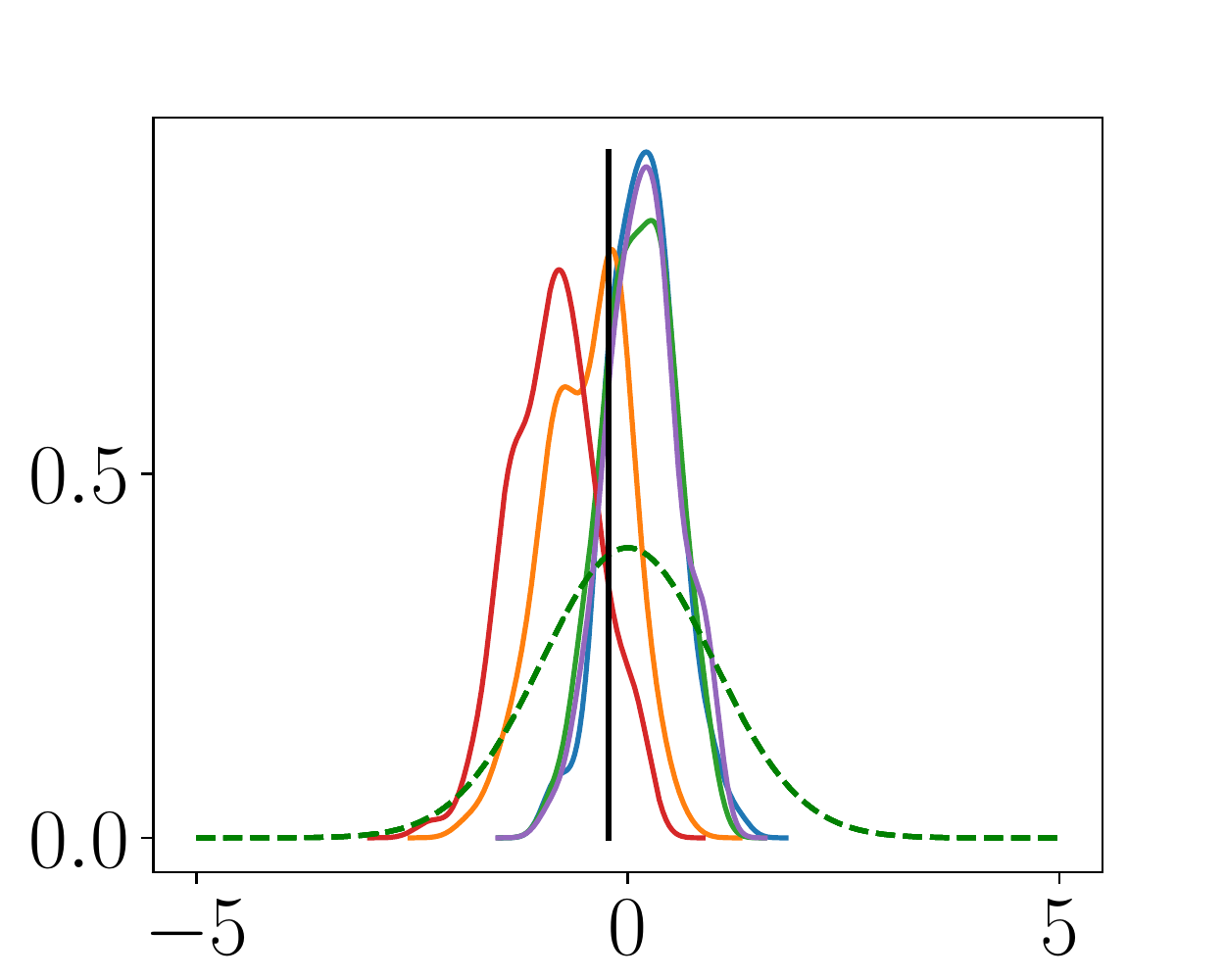}}\qquad
\subfigure[$\tilde{\alpha}$ (PEN-0)]{\includegraphics[width=.18\columnwidth]{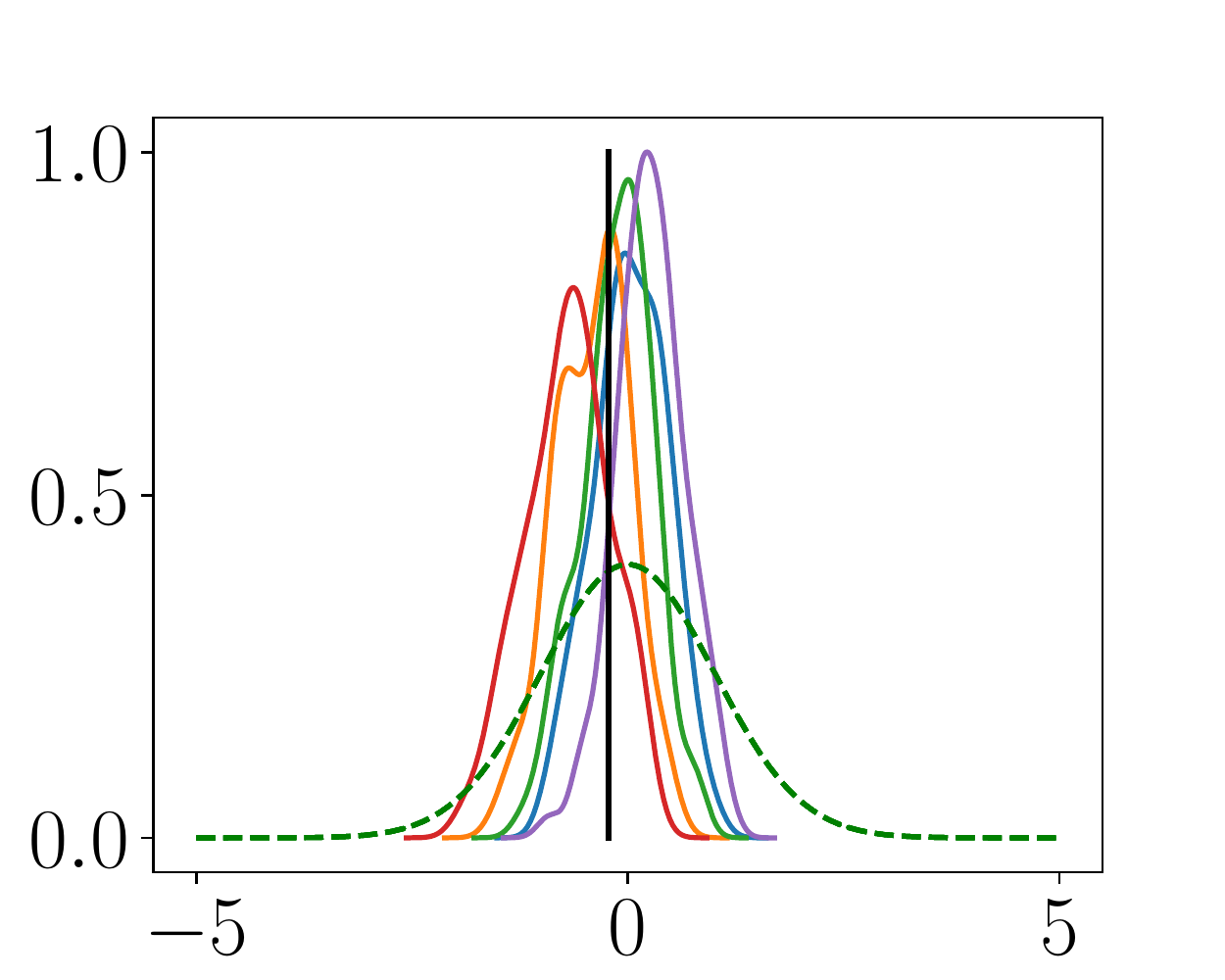}}
\\
\subfigure[$\tilde{\beta}$ (Handpicked) ]{\includegraphics[width=.18\columnwidth]{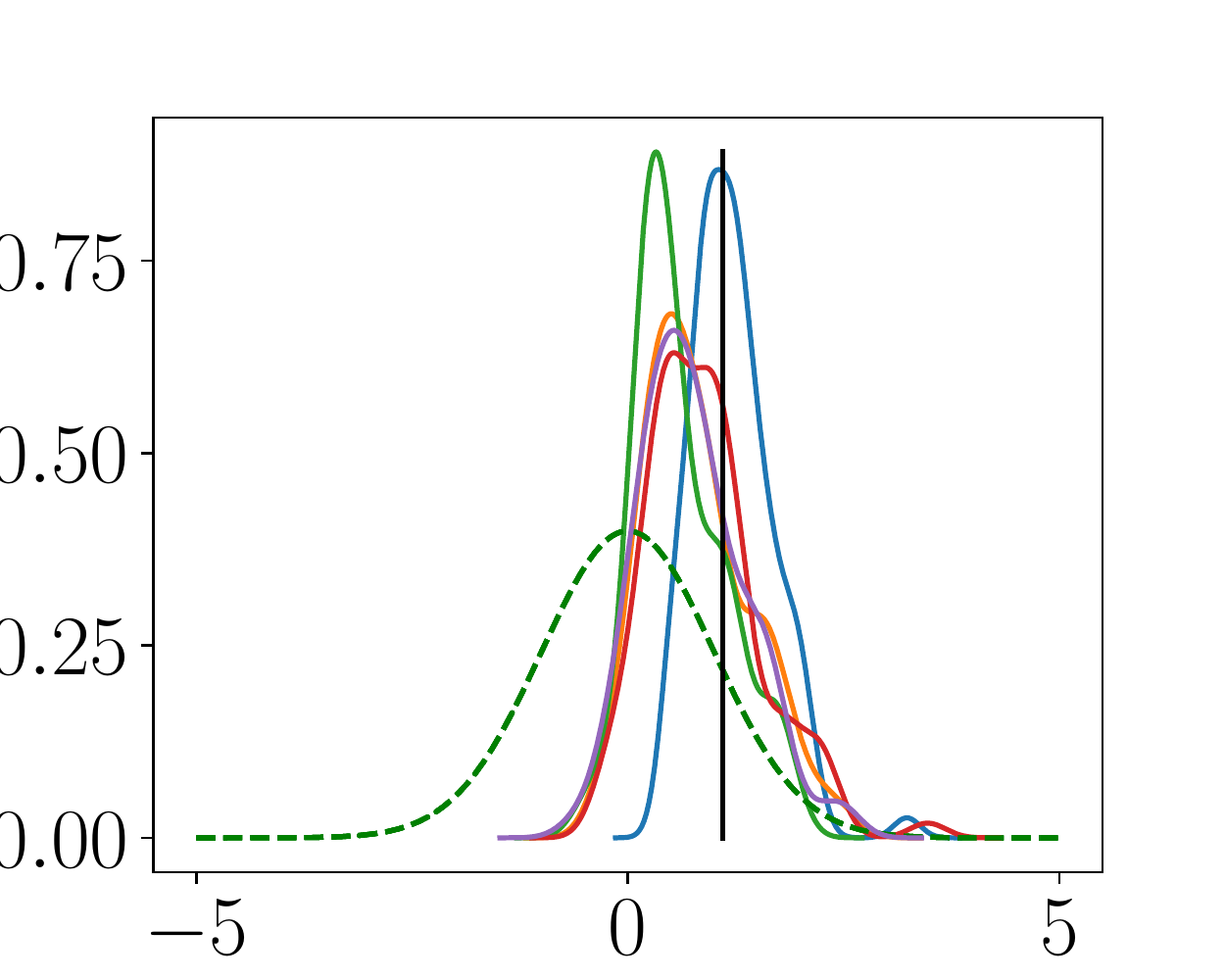}}\qquad
\subfigure[$\tilde{\beta}$ (MLP large)]{\includegraphics[width=.18\columnwidth]{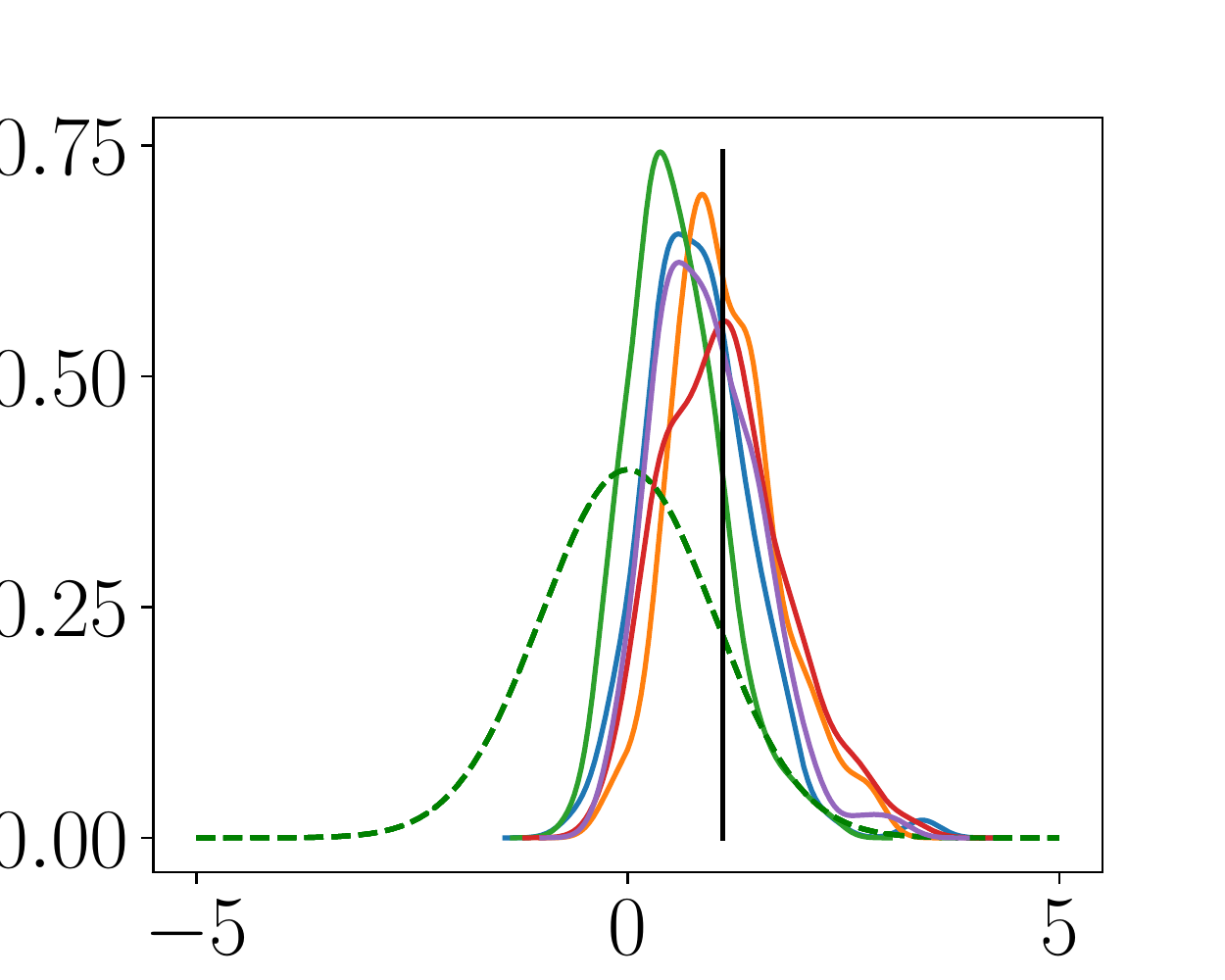}}\qquad
\subfigure[$\tilde{\beta}$ (MLP pre)]{\includegraphics[width=.18\columnwidth]{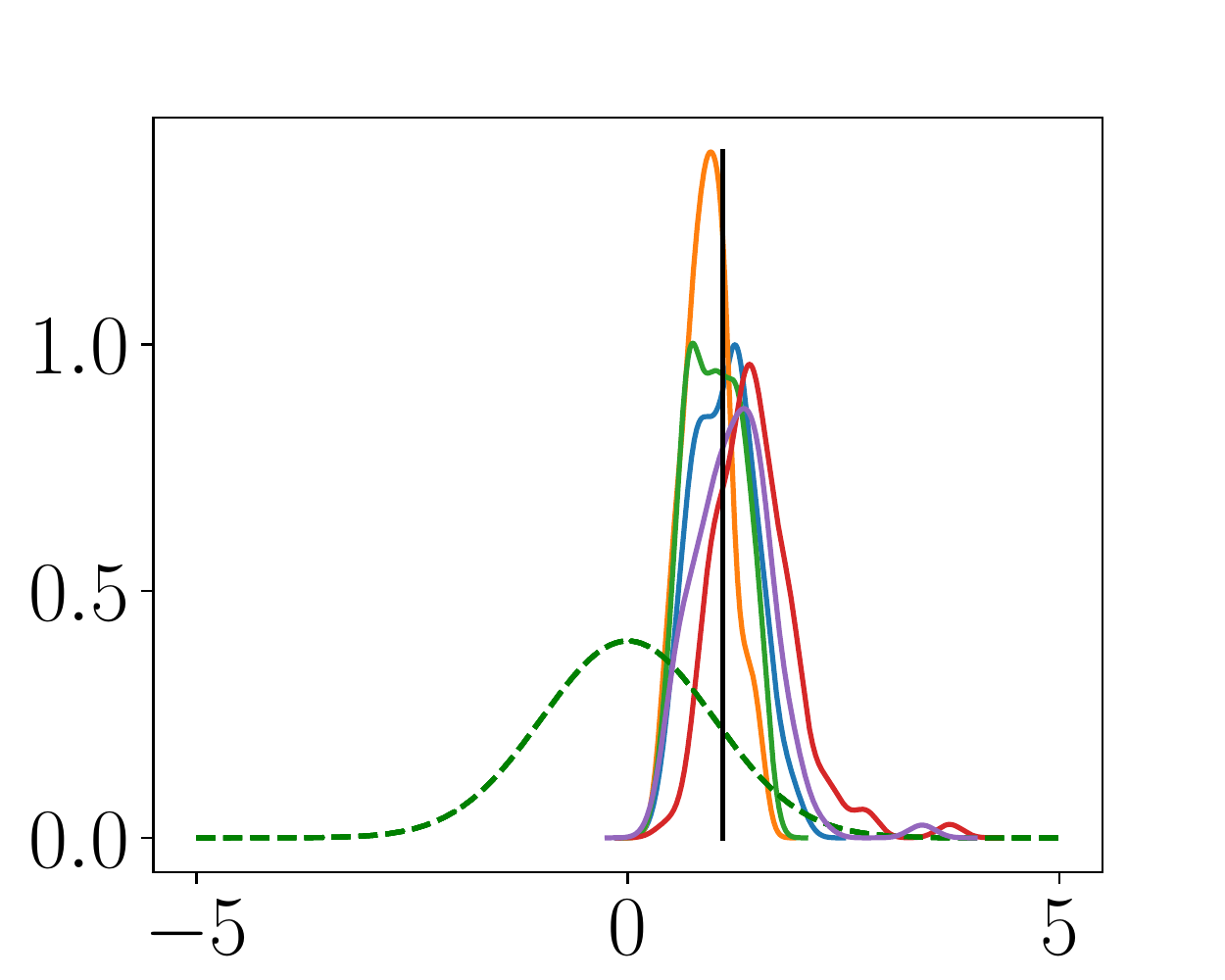}}\qquad
\subfigure[$\tilde{\beta}$ (PEN-0)]{\includegraphics[width=.18\columnwidth]{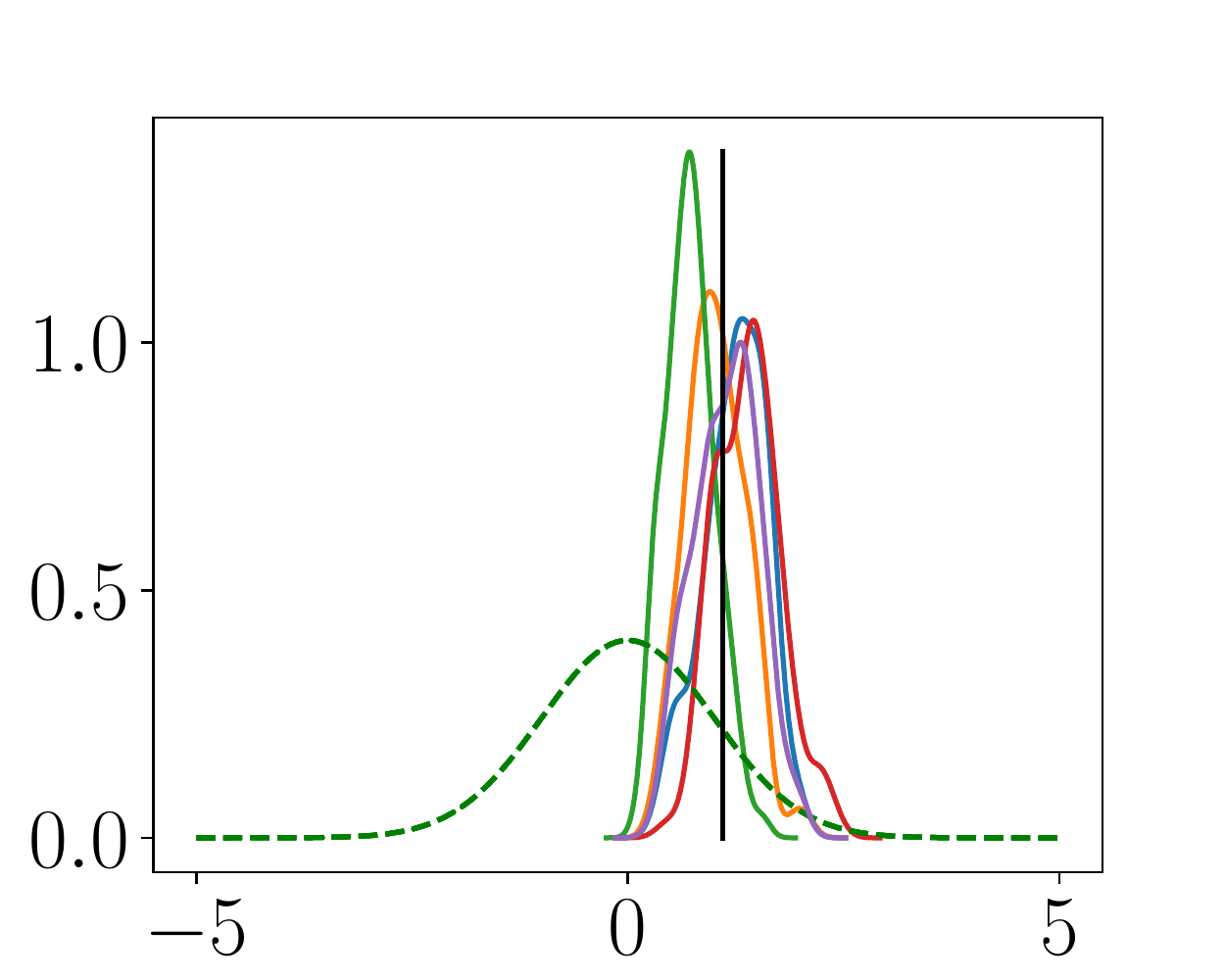}}
\\
\subfigure[$\tilde{\gamma}$ (Handpicked) ]{\includegraphics[width=.18\columnwidth]{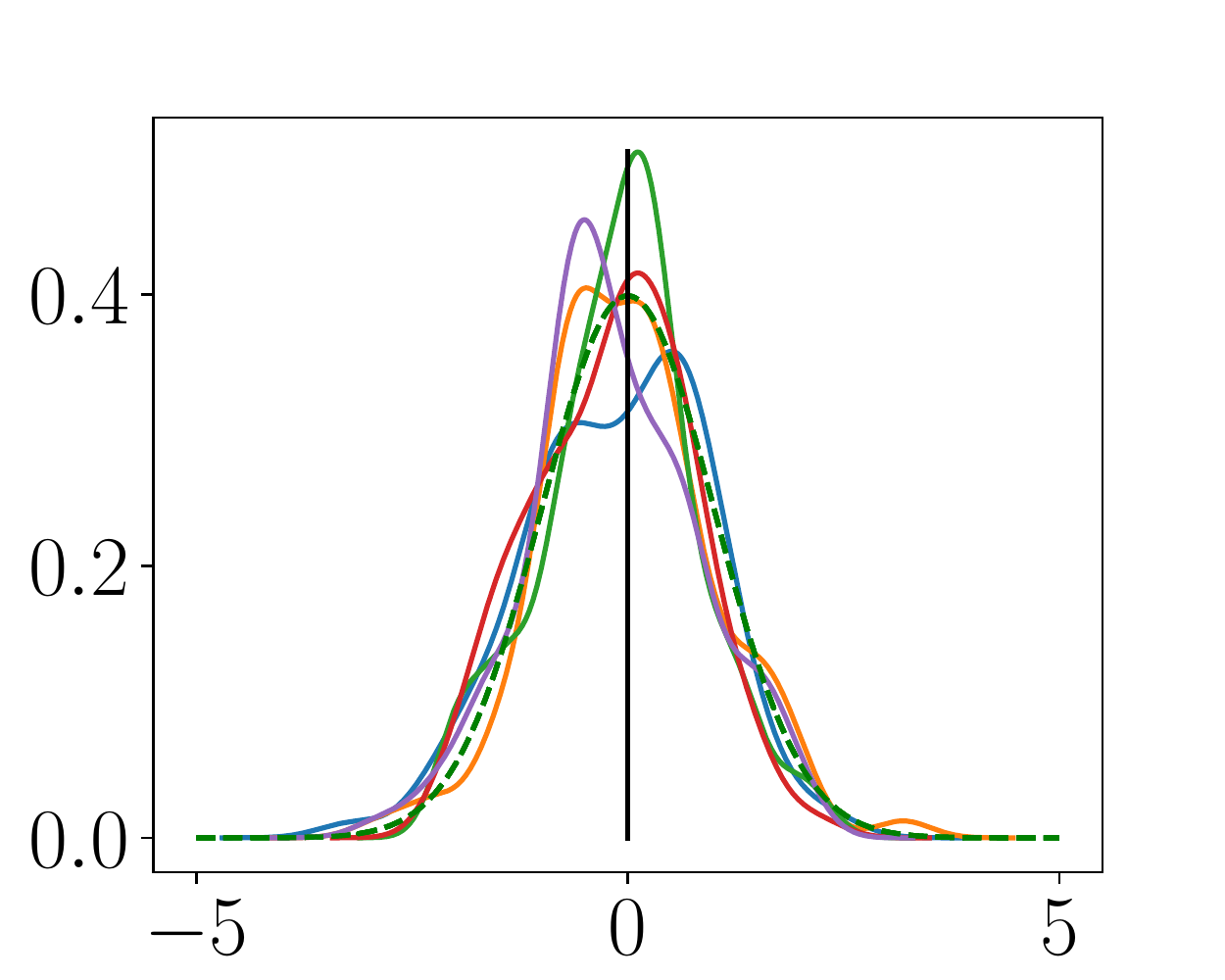}}\qquad
\subfigure[$\tilde{\gamma}$ (MLP large)]{\includegraphics[width=.18\columnwidth]{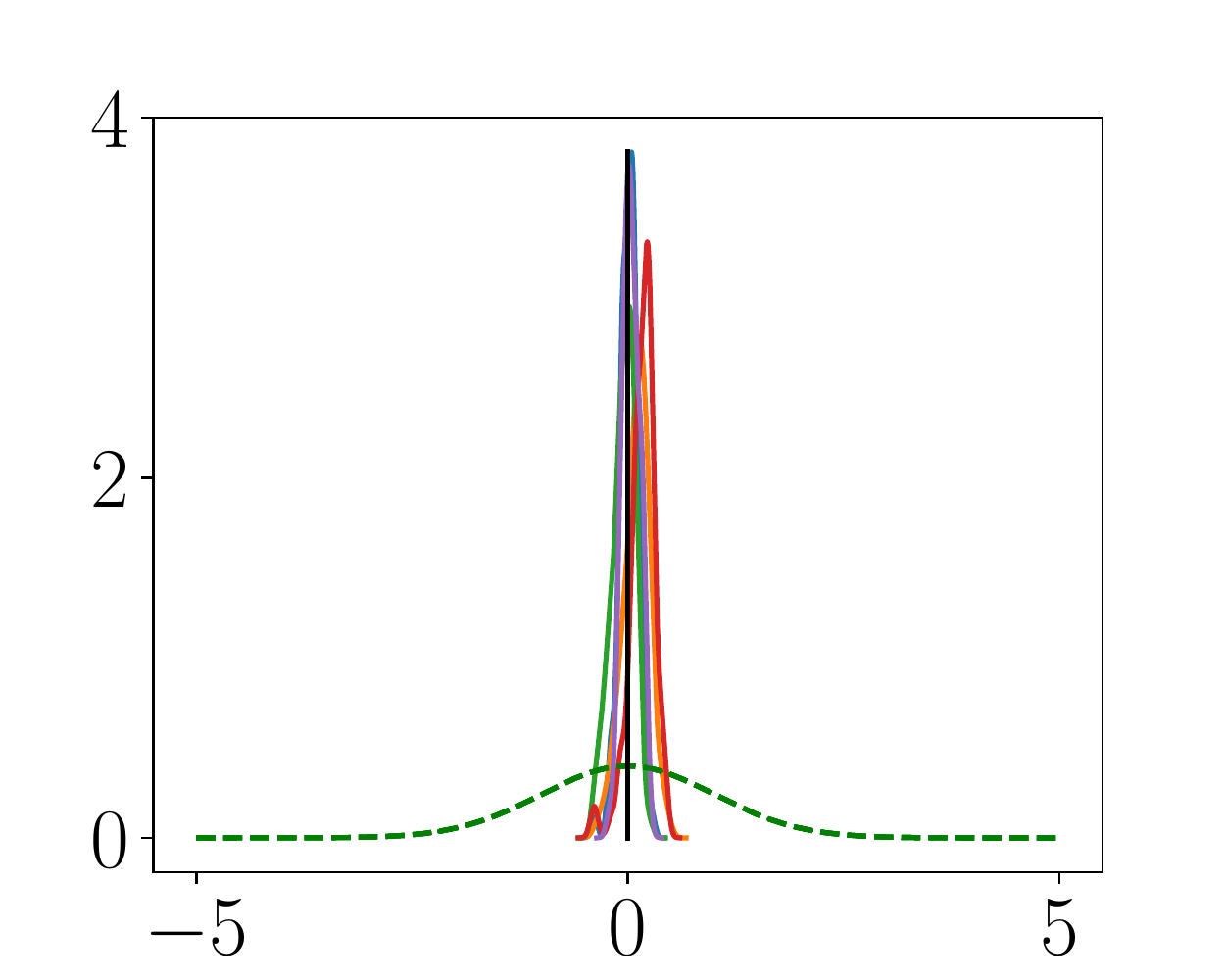}}\qquad
\subfigure[$\tilde{\gamma}$ (MLP pre)]{\includegraphics[width=.18\columnwidth]{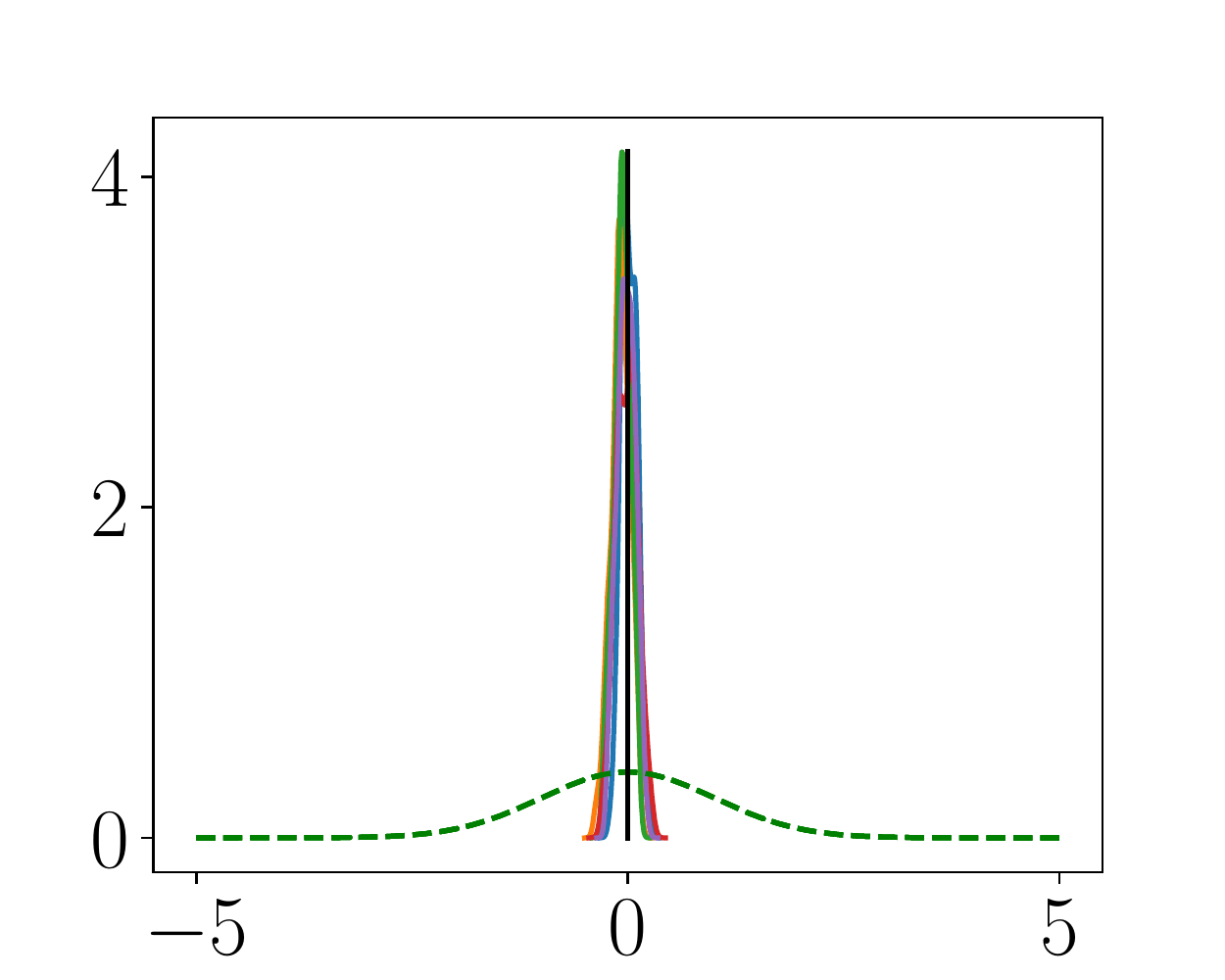}}\qquad
\subfigure[$\tilde{\gamma}$ (PEN-0)]{\includegraphics[width=.18\columnwidth]{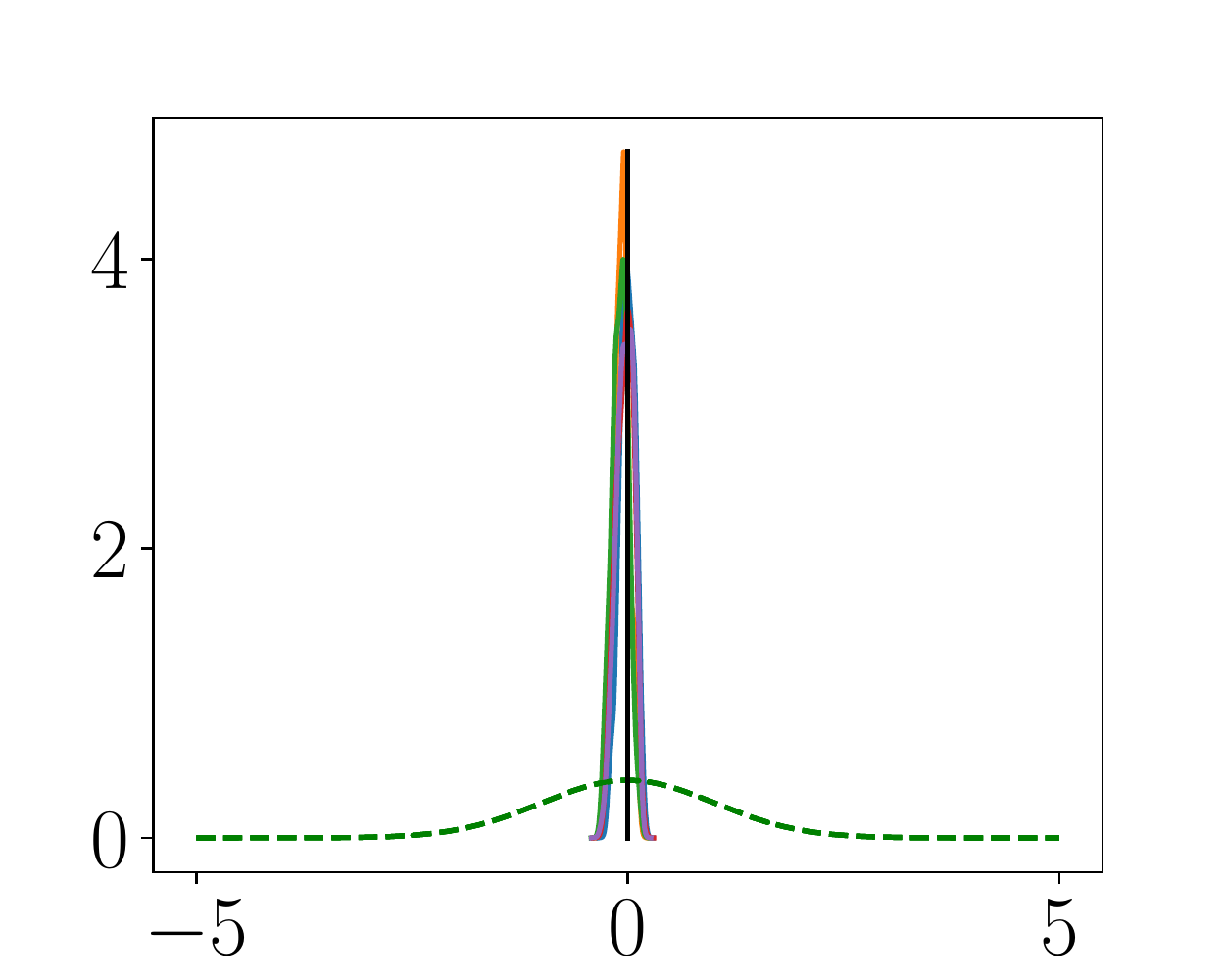}}
\\
\subfigure[$\tilde{\delta}$ (Handpicked) ]{\includegraphics[width=.18\columnwidth]{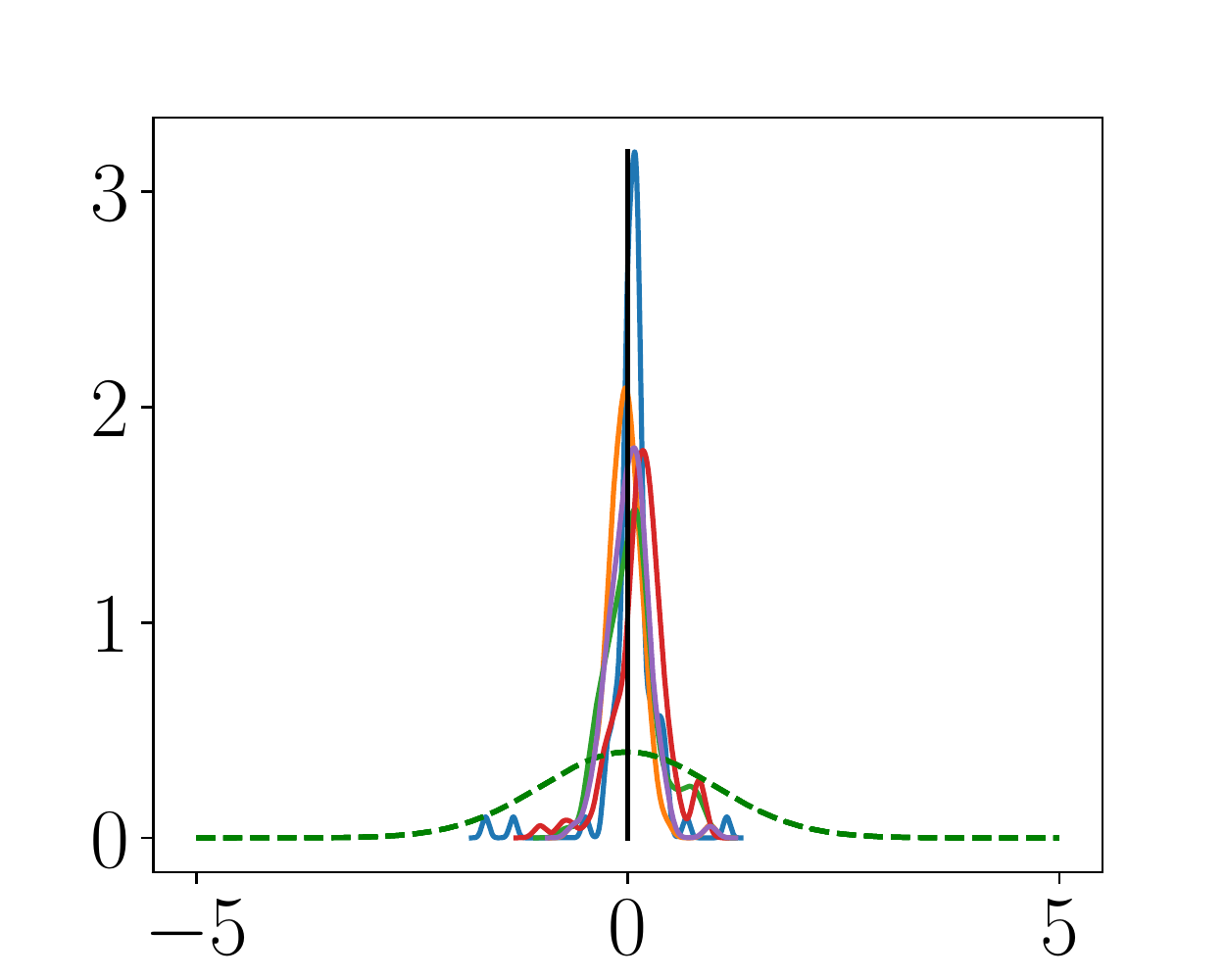}}\qquad
\subfigure[$\tilde{\delta}$ (MLP large)]{\includegraphics[width=.18\columnwidth]{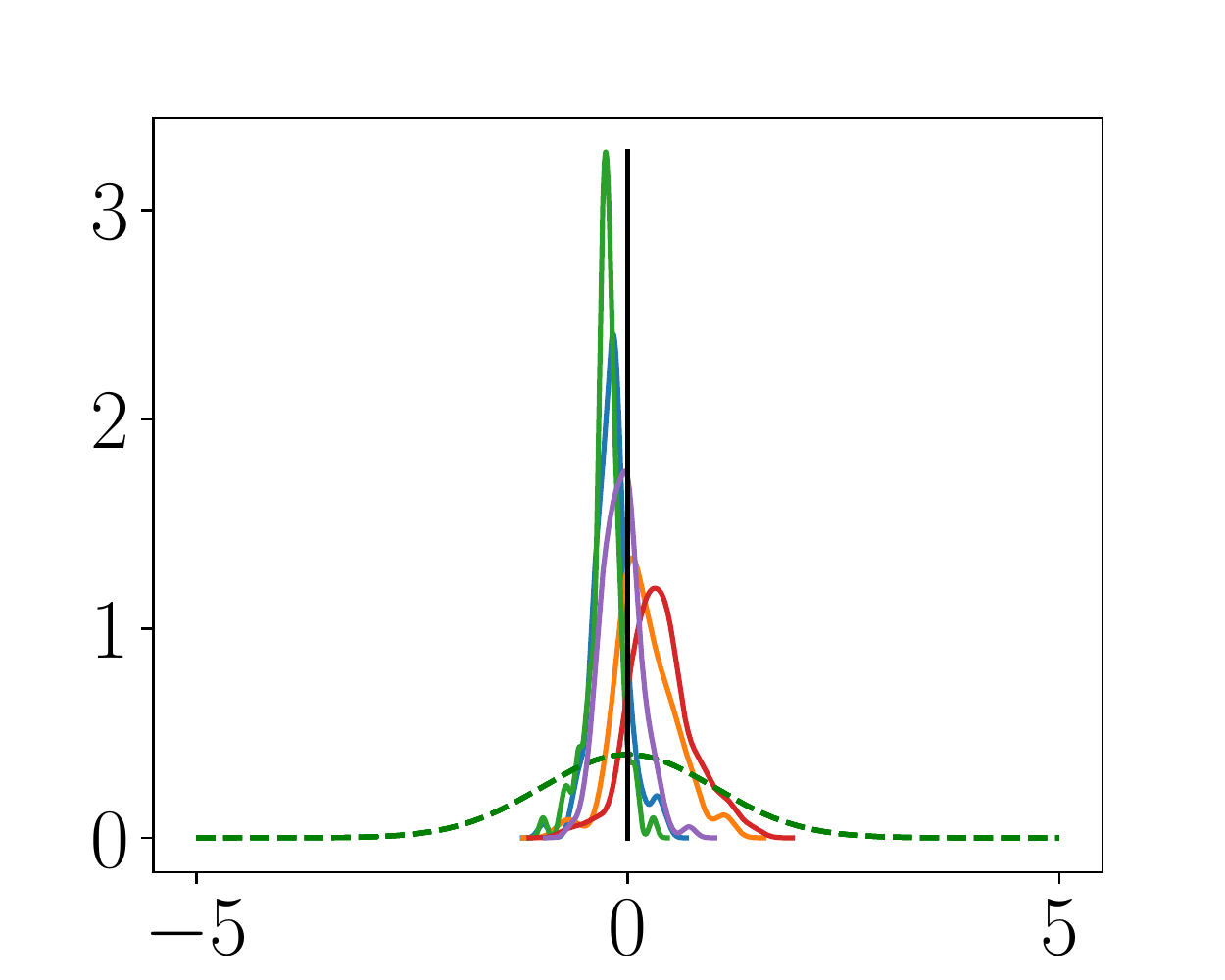}}\qquad
\subfigure[$\tilde{\delta}$ (MLP pre)]{\includegraphics[width=.18\columnwidth]{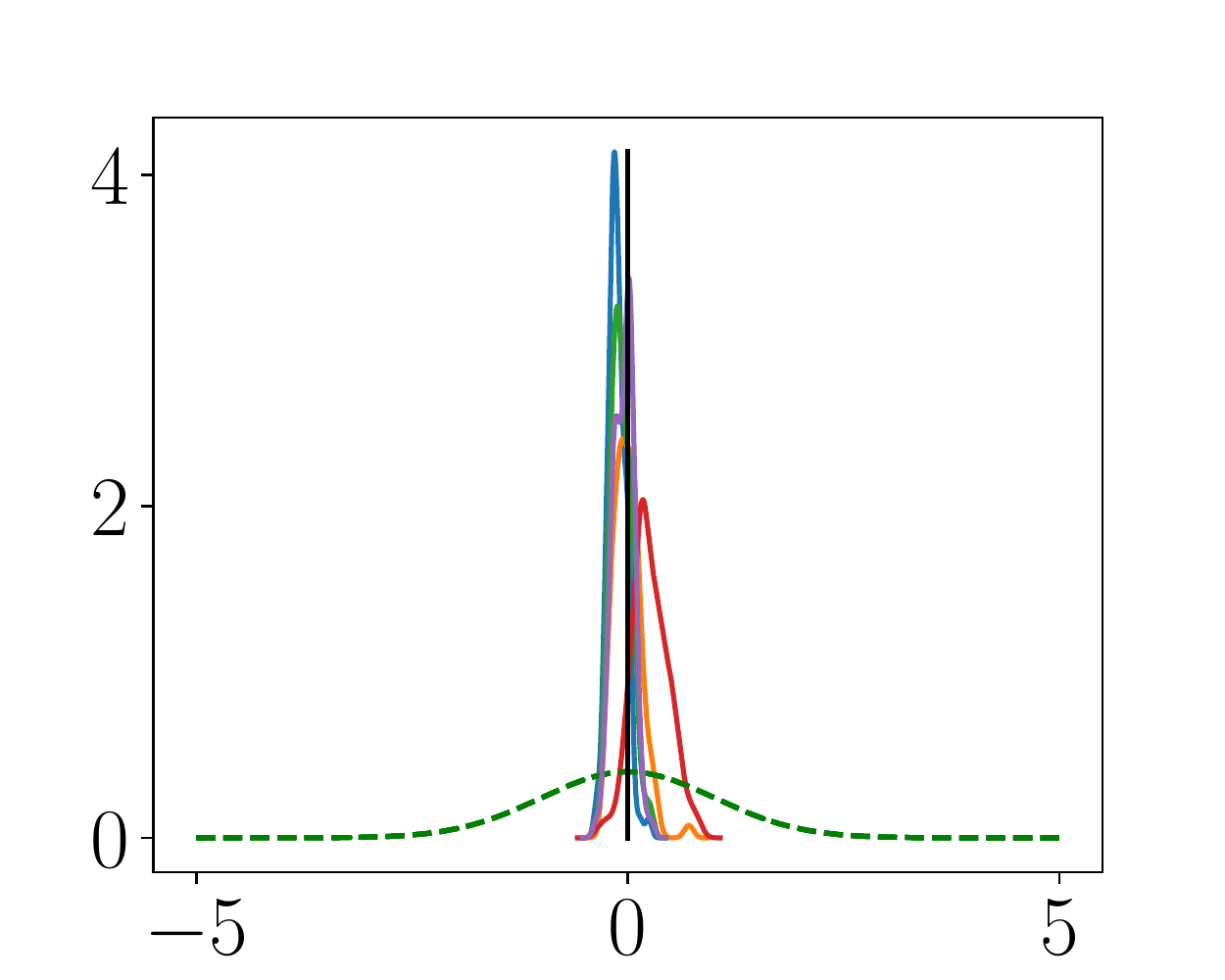}}\qquad
\subfigure[$\tilde{\delta}$ (PEN-0)]{\includegraphics[width=.18\columnwidth]{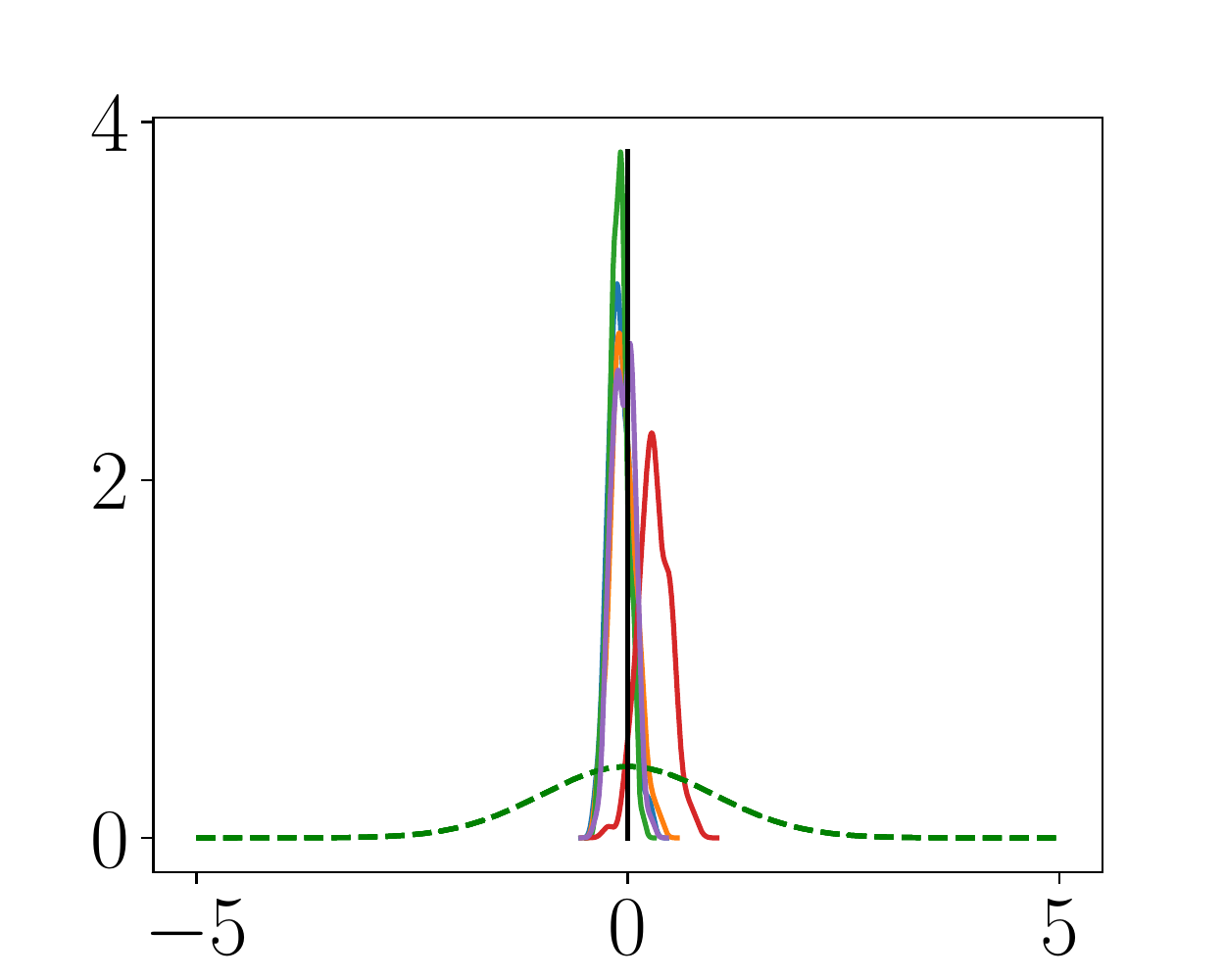}}

\caption{Results for $\alpha$-stable distribution: Approximate marginal ABC posteriors. Results obtained using $5 \cdot 10^5$ training data observations. The green dashed line is the prior distribution. The colored lines show posteriors from 5 independent experiments. These posteriors are not cherry-picked.}
 \label{fig:alphastableposteriors}
\vskip -0.2in
\end{figure}

\subsection{Autoregressive time series model}\label{sec:AR2}

An autoregressive time series model of order two (AR(2)) follows:

$$y_l = \theta_1 y_{l-1} + \theta_2 y_{l-2} + \xi_l,\qquad \xi_l \sim N(0,1).$$

The AR(2) model is identifiable if the following are fulfilled: $\theta_2 < 1 + \theta_1, \theta_2 < 1 - \theta_1, \theta_2 > -1$ \cite{fuller1976introduction}. We let the resulting triangle define the uniform prior for the model. The ground-truth parameters for this simulation study are set to $\theta = [0.2, -0.13]$, and the data size is $M=100$.
AR(2) is a Markov model, hence and the requirement for PEN-$d$ with $d > 0$ is fulfilled.

We compare five methods for computing the summaries: (i) handpicked summary statistics, i.e. $S(y) = [\gamma(y,1), \gamma(y,2), \gamma(y,3), \gamma(y,4), \gamma(y,5)]$ ($\gamma(y,i)$ is autocovariance at lag $i$), which are reasonable summary statistics since autocovariances are normally employed in parameter estimation for autoregressive models, for instance when using the Yule–Walker equations; (ii) ``MLP small'' network; (iii) ``MLP large''; (iv) PEN-0 (DeepSets); and (v) PEN-2. Since AR(2) is a time series model it makes sense to use PEN-2, and PEN-0 results are reported only in the interest of comparison. Here we do not consider the ``MLP pre'' method used in Section \ref{sec:gandk} and \ref{sec:alphstable}, since the empirical distribution function does not have any reasonable meaning for time series data.
The likelihood function for AR(2) is known and we can therefore sample from the true posterior using MCMC.

Results are in \cref{fig:res_ar2}. PEN-2 outperforms MLP, for example we can see that the precision achieved when PEN-2 is trained on $10^3$ training observations can be achieved by MLP when trained on $10^5$ observations, implying an improvement of a $10^2$ factor. Approximate and exact posteriors are in \cref{fig:ar2_approx_posteriors} and we conclude that posteriors for both MLP and PEN-2 are similar to the true posterior when many training observations are used. However, the approximate posterior for MLP degrades significantly when the number of  training observations is reduced and is very uninformative with $10^3$ and even with $10^4$ observations, while for PEN-2 the quality of the approximate posterior distribution is only marginally reduced.
\begin{figure}[ht]
\vskip 0.2in
\begin{center}
\centerline{\includegraphics[width=1\columnwidth]{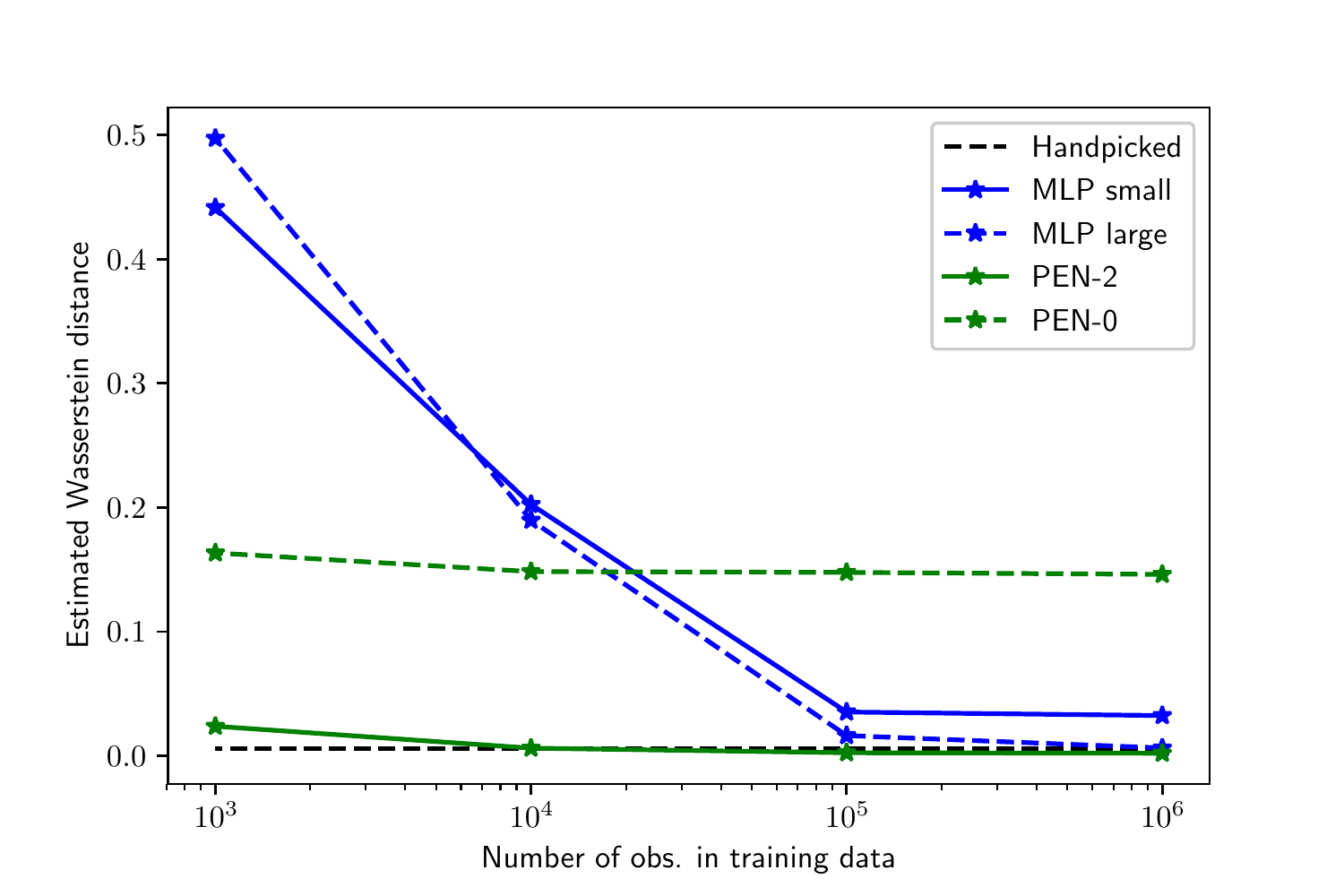}}
\caption{Results for AR(2) model: Estimated Wasserstein distances (mean over 100 data sets) when comparing the true posterior with ABC posteriors, for varying sizes of training data when using DNN models.} 
\label{fig:res_ar2}
\end{center}
\vskip -0.2in
\end{figure}
\begin{figure}[ht]
\vskip 0.2in
\centering
\subfigure[Handpicked \textcolor{white}{some extra text ;)}]{\includegraphics[width=.27\columnwidth]{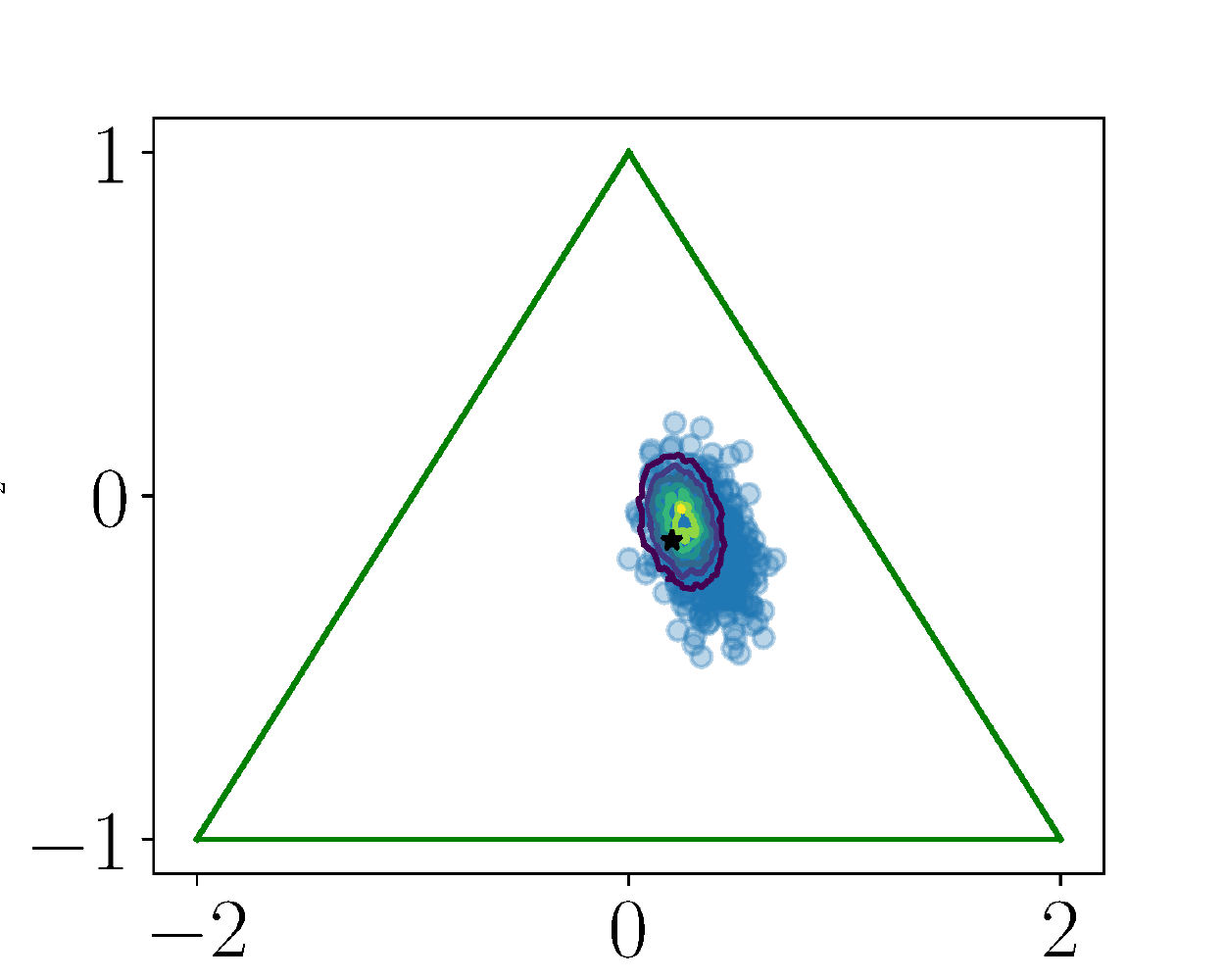}}\qquad
\subfigure[MLP large ($10^6$)]{\includegraphics[width=.27\columnwidth]{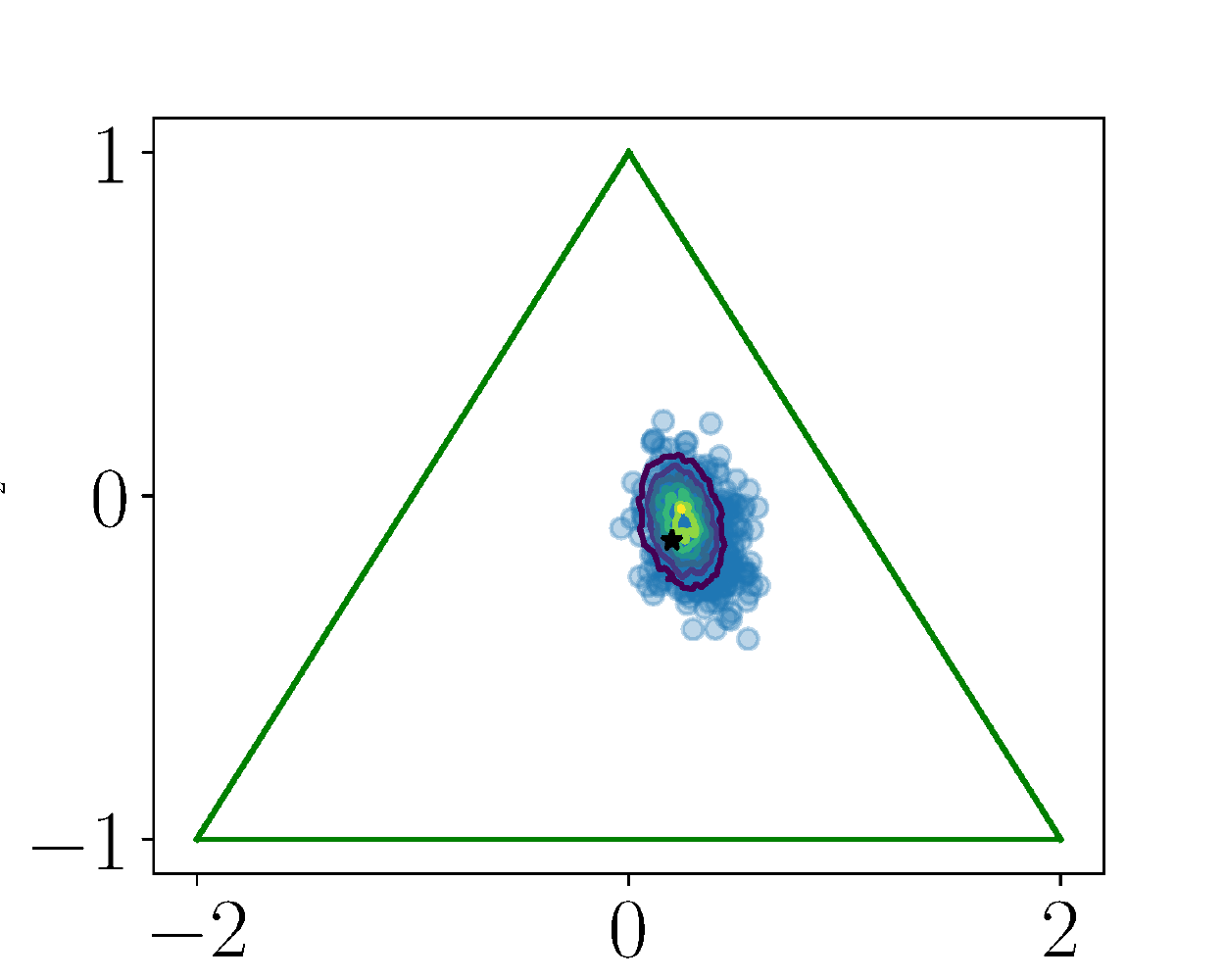}}\qquad
\subfigure[PEN-2 ($10^6$) \textcolor{white}{some extra text ;)}]{\includegraphics[width=.27\columnwidth]{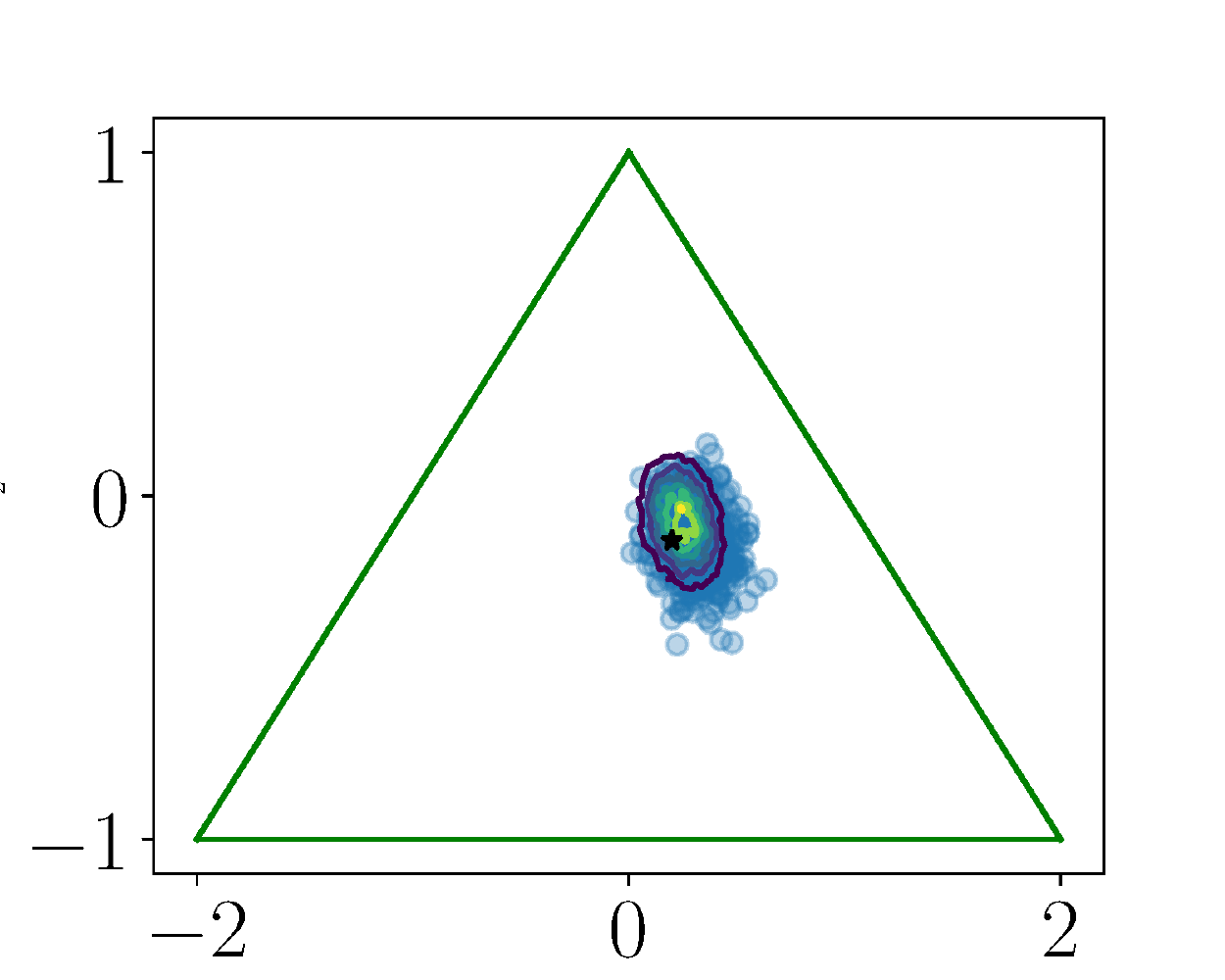}}\qquad
\\
\hspace{2.9cm} 
\subfigure[MLP large ($10^5$)]{\includegraphics[width=.27\columnwidth]{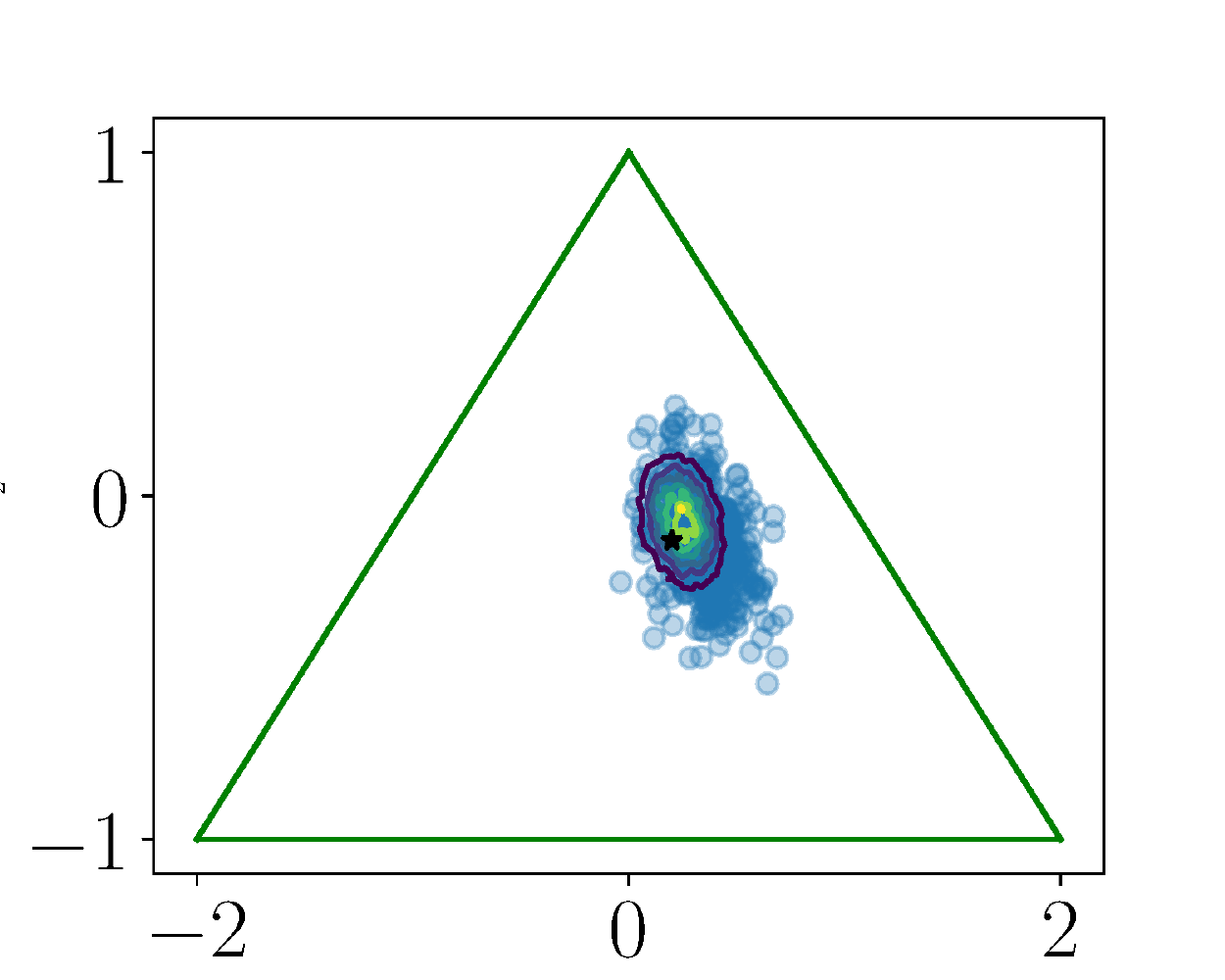}}\qquad
\subfigure[PEN-2 ($10^5$) \textcolor{white}{some extra text ;)}]{\includegraphics[width=.27\columnwidth]{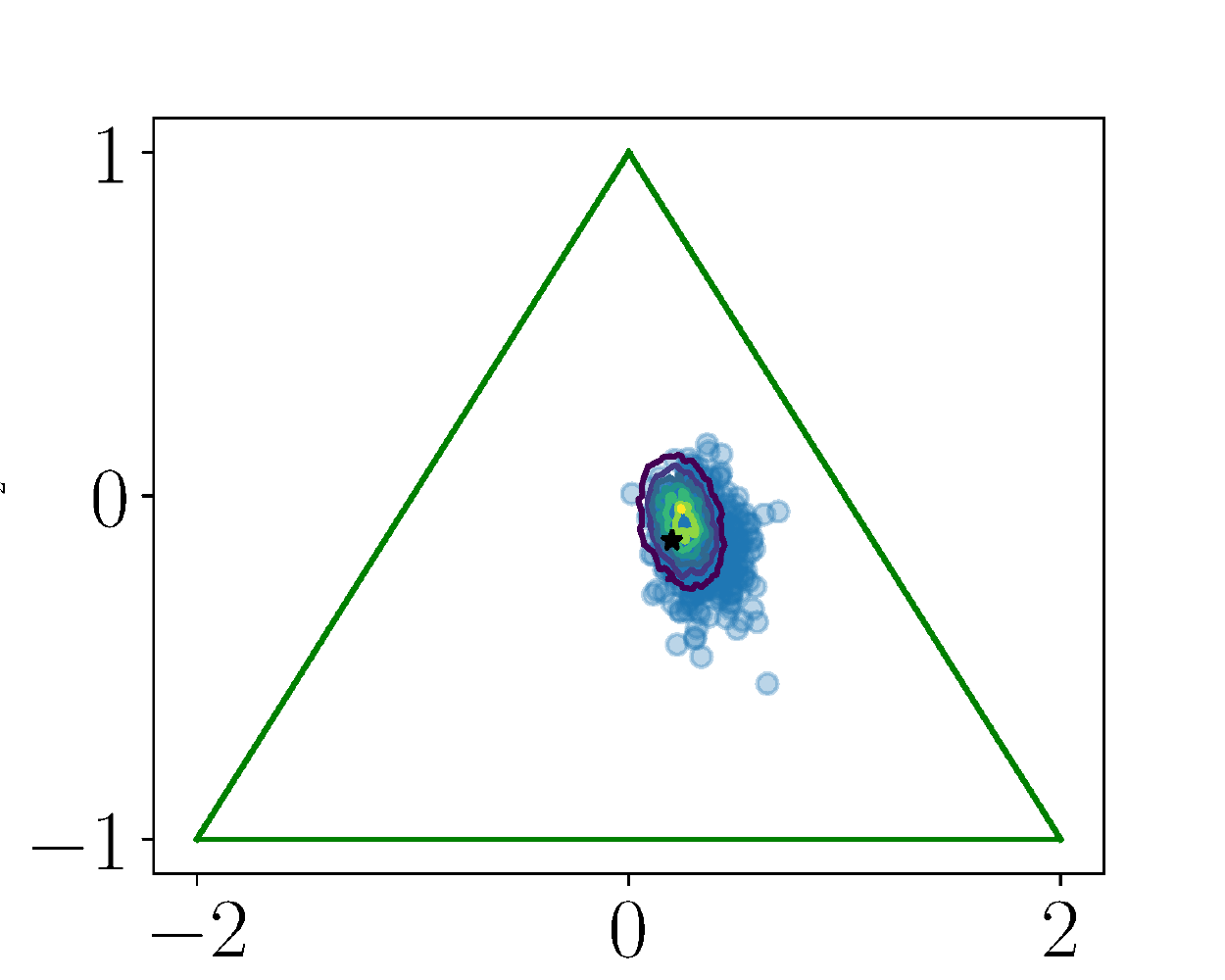}}\qquad
\\
\hspace{2.9cm}
\subfigure[MLP large ($10^4$)]{\includegraphics[width=.27\columnwidth]{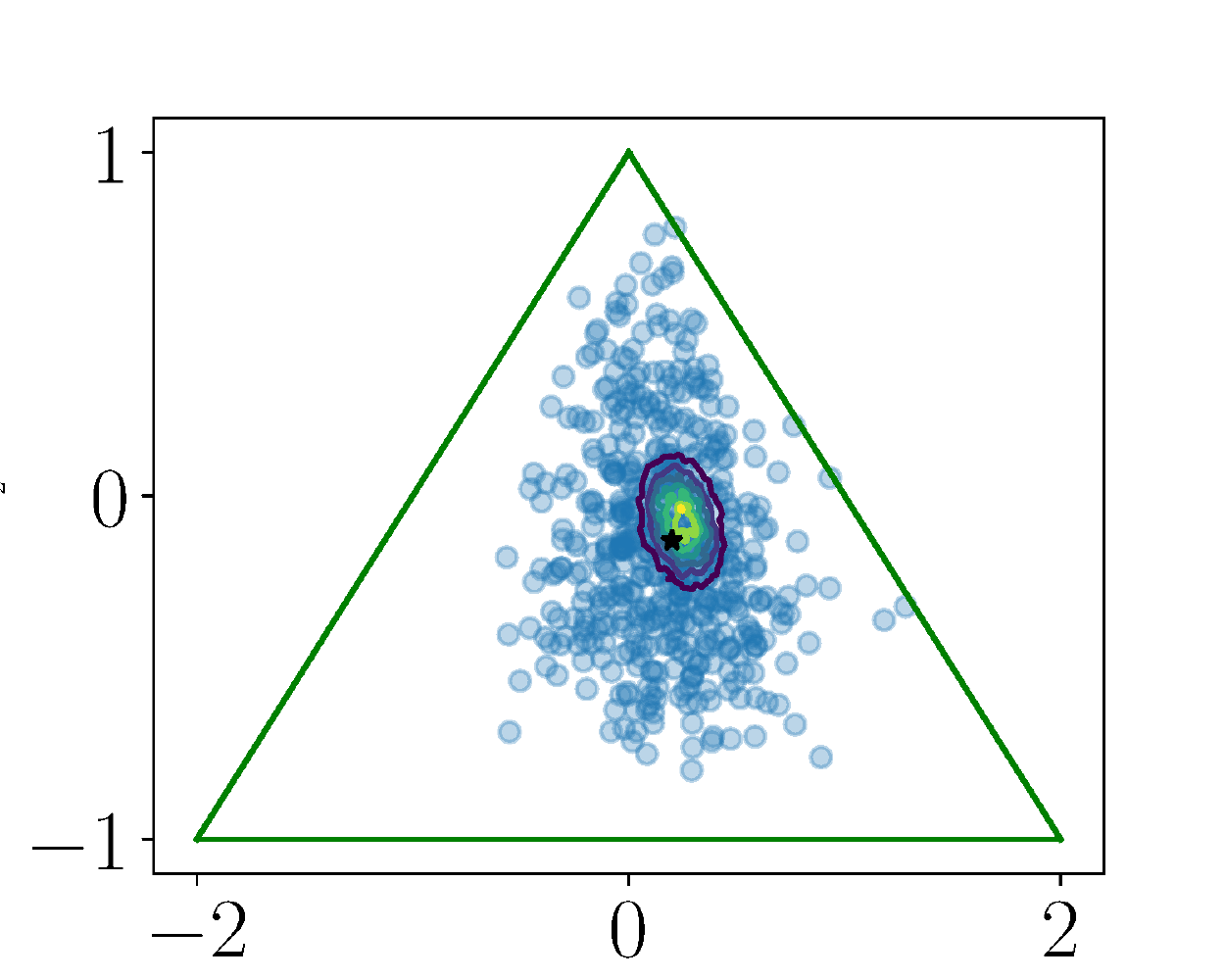}}\qquad
\subfigure[PEN-2 ($10^4$) \textcolor{white}{some extra text ;)}]{\includegraphics[width=.27\columnwidth]{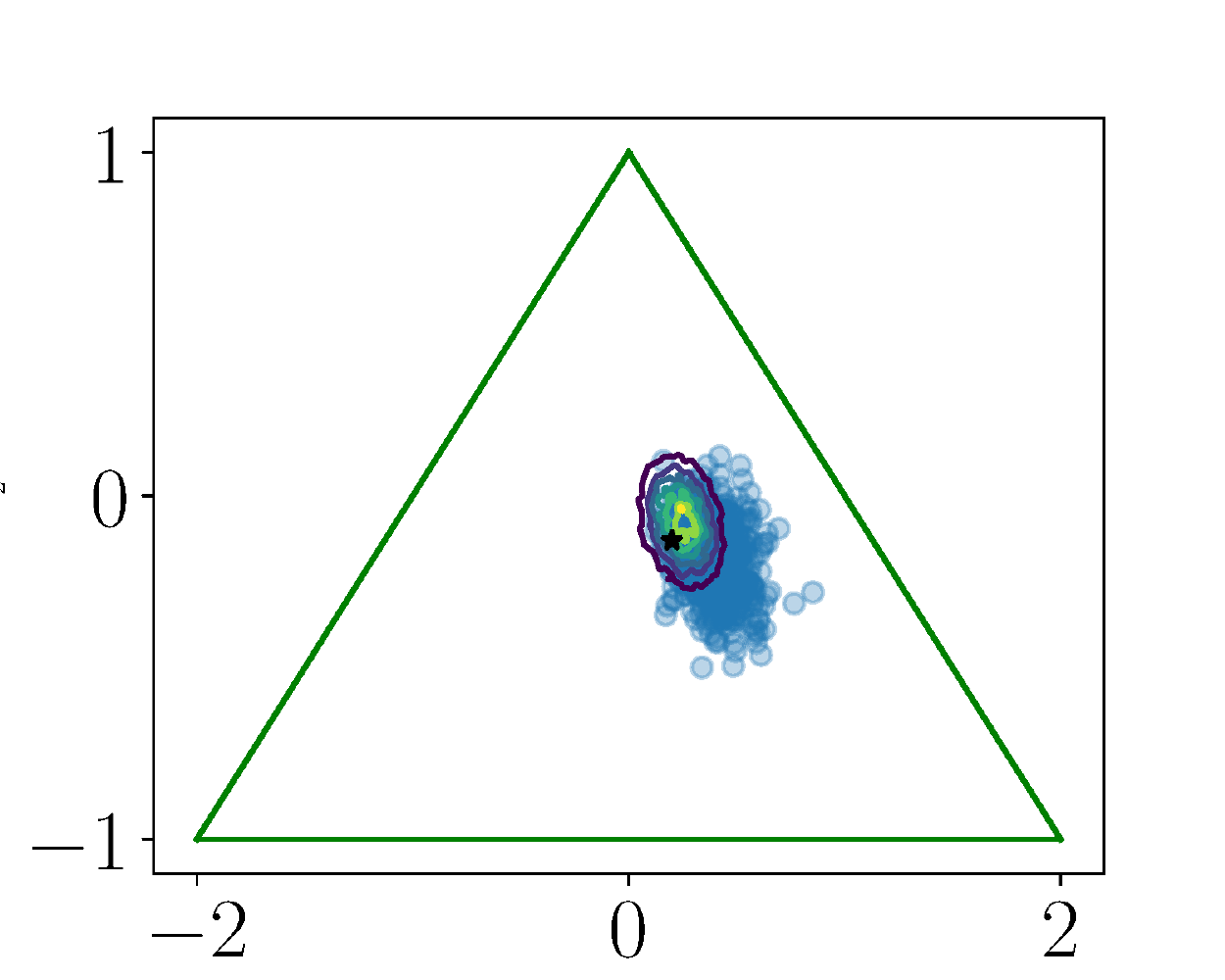}}\qquad
\\
\hspace{2.9cm}
\subfigure[MLP large ($10^3$)]{\includegraphics[width=.27\columnwidth]{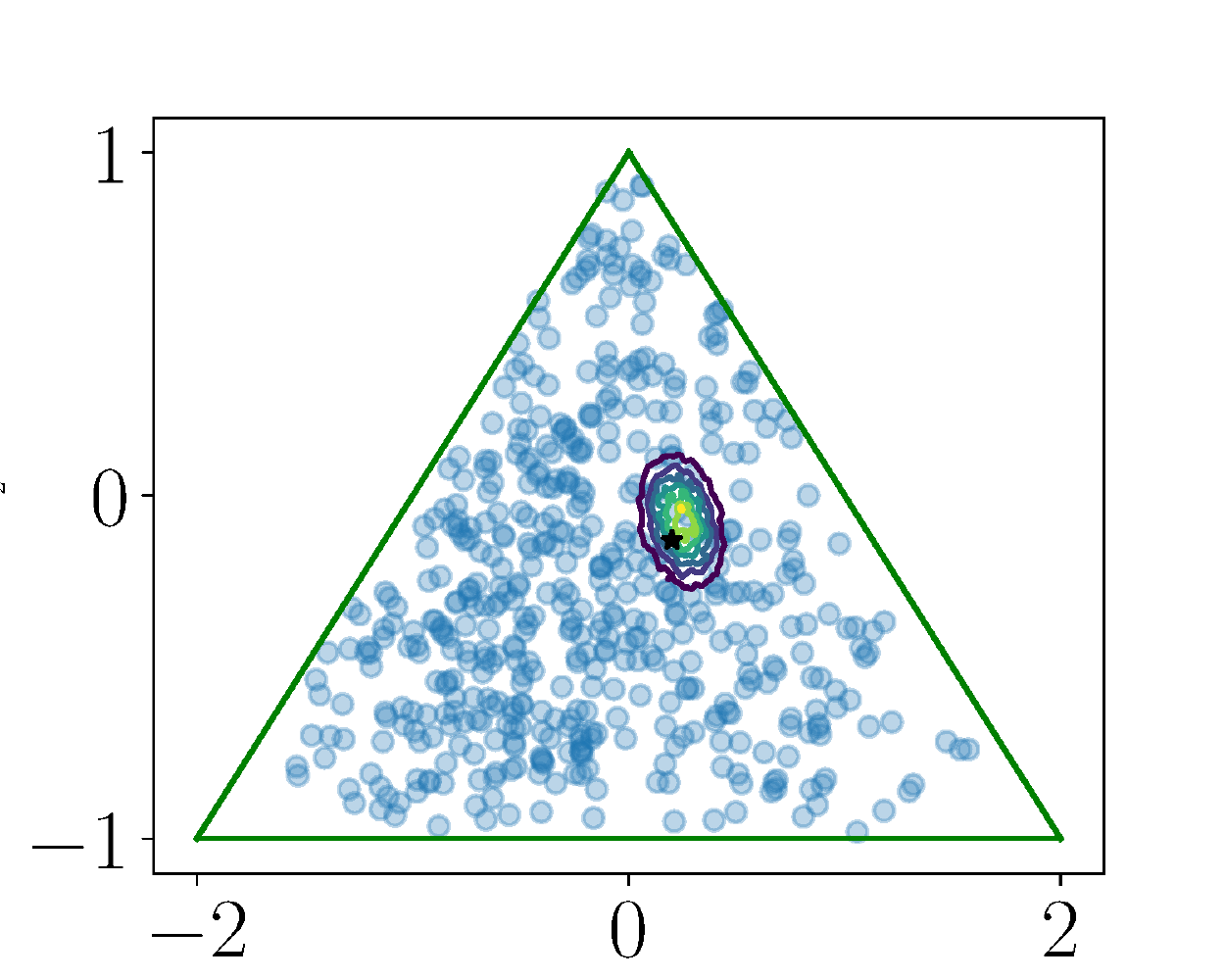}}\qquad
\subfigure[PEN-2 ($10^3$) \textcolor{white}{some extra text ;)}]{\includegraphics[width=.27\columnwidth]{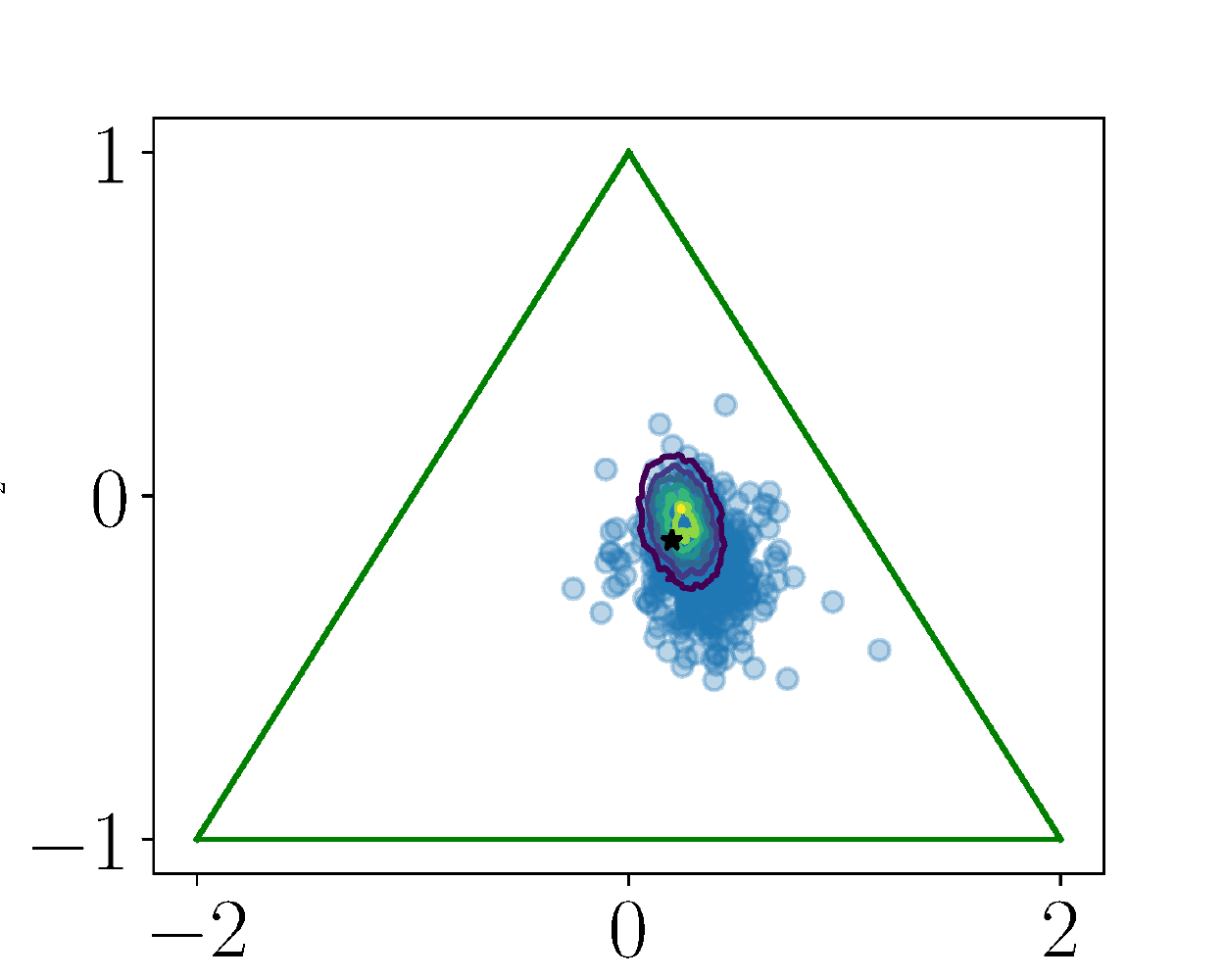}}\qquad
\caption{Results for AR(2) model. The green line indicates the prior distribution, the contour plot is from the exact posterior and the blue dots are 100 samples from the several ABC posteriors. The number in parenthesis indicates number of observations in the training data set. These posteriors are not cherry-picked.}
\label{fig:ar2_approx_posteriors}
\vskip -0.2in
\end{figure}

\subsection{Moving average time series with observational noise model}
We consider a partially observed time series, with latent dynamics given by a moving average MA(2) model and observations perturbed with Gaussian noise:
\begin{align*}
\begin{cases}
y_l = x_l + \xi^y_l, \qquad \xi^y_l \sim N(0, \sigma_\epsilon = 0.3), \\
x_l = \xi_l + \theta_1 \xi^x_{l-1} + \theta_2 \xi^x_{l-2}, \qquad \xi^x_l \sim N(0, 1),
\end{cases}
\end{align*}
where the $\xi^x_l$ and $\xi^y_l$ are all independent.
An MA(2) process without observational noise is identifiable if $\theta_1\in[-2,2]$, $\theta_2\in[-1,1]$, and $\theta_2\pm\theta_1\ge -1$. Same as in \citet{jiang2017learning}, we define a uniform prior over this triangle. We use the same setting as in  \citet{jiang2017learning} and set the ground-truth parameters for the simulation study to $\theta = [0.6, 0.2]$. We only observe $\{y_l\}$ and the number of observations is $M=100$.

The latent dynamics are not Markovian, hence the Markov property required for PEN of order larger than 0 is not fulfilled, however, the quasi-Markov structure of the data might still allow us to successfully use PEN-$d$ with an order $d$ larger than 0. An additional complication is given by the observational noise $\xi_l^y$, further perturbing the dynamics.
Once more, we compare five methods for computing the summary statistics: (i) handpicked summaries $S(y) = [\gamma(y,1), \gamma(y,2)]$, i.e. we follow \citet{jiang2017learning}; (ii) ``MLP small''; (iii) ``MLP large''; (iv) PEN-0 (DeepSets); and (v) PEN-10. Same as for the AR(2) example, here PEN-0 results are reported only in the interest of a comparison with PEN-10, as for a time-series model it is expected from PEN-0 to be suboptimal.
Also in this case the likelihood function is available, and we can compute the true posterior distribution. Once more, we compare the approximate posteriors to the true posterior over 100 different data sets, see \cref{fig:res_ma2}. We conclude that PEN-10 performs slightly better than MLP when the training data set is large, and that PEN-10 outperforms MLP when we restrict the size of the training data. Once more, we notice that PEN-10 implies a factor $\geq 10$ in terms of savings on the size of the training data.

\begin{figure}[ht]
\vskip 0.2in
\begin{center}
\centerline{\includegraphics[width=1\columnwidth]{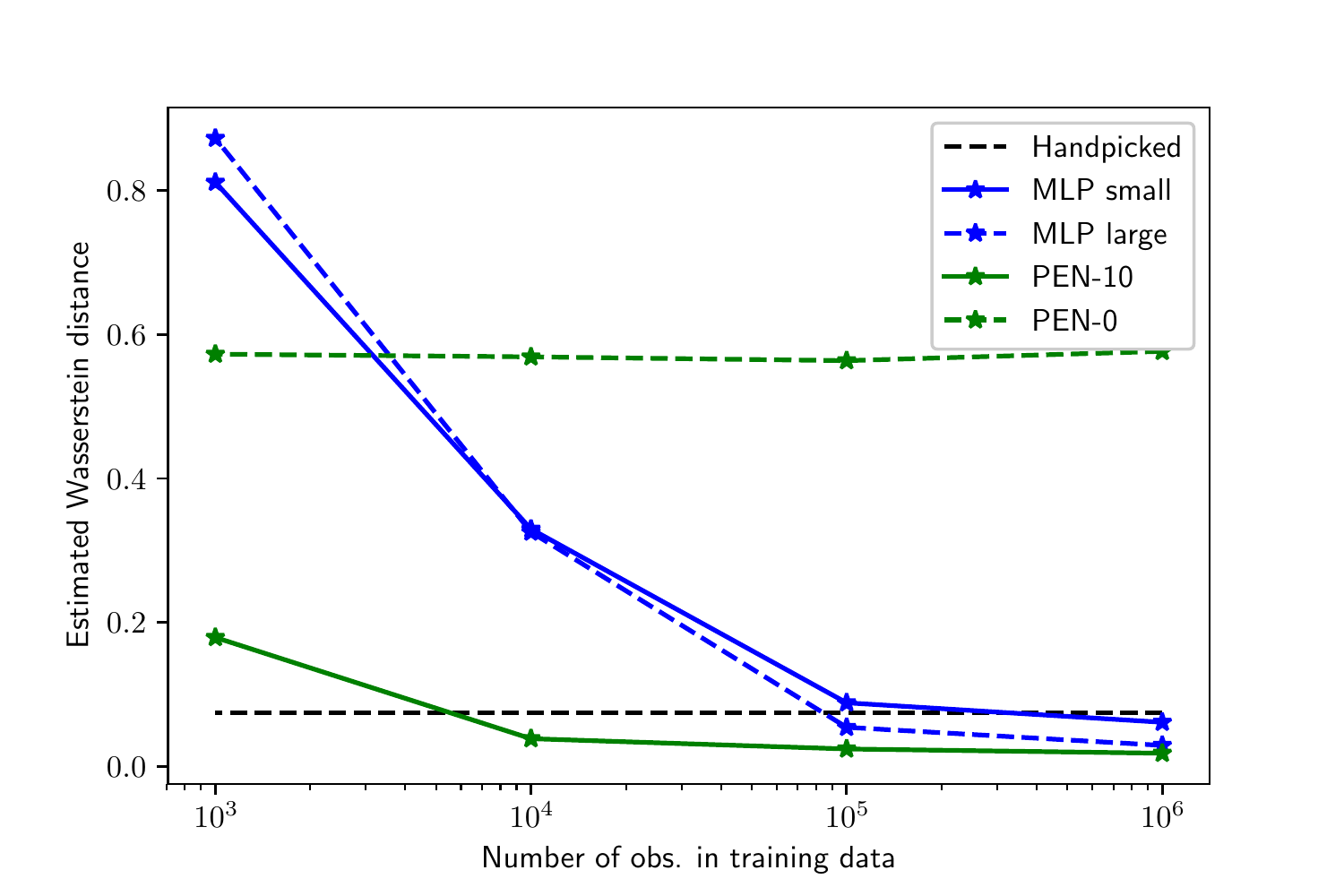}}
\caption{Results for MA(2) model: Estimated Wasserstein distances (mean over 100 data sets) when comparing the true posterior with ABC posteriors.}
\label{fig:res_ma2}
\end{center}
\vskip -0.2in
\end{figure}

\section{Discussion} \label{sec:discussion}

Simulation experiments show that our partially exchangeable networks (PENs) achieve competitive results in learning summary statistics for use in ABC algorithms, outperforming the other deep learning methods that we have considered. Moreover, PENs require much smaller training data to achieve the same inference accuracy of competitors: in our experiments a reduction factor of order $10$ to $10^2$ was observed.

As mentioned in Section \ref{sec:abc}, in this work we were not focused on the specific ABC algorithm used for sampling, but only on learning summary statistics for ABC. However, in future work we plan to use  our  approach  for  constructing  summary  statistics  alongside  more  sophisticated variants of ABC methods, such as those which combine ABC with Markov chain  Monte  Carlo \cite{sisson2011likelihood}  or  sequential  techniques \cite{beaumont2009adaptive}.

\citet{murphy2018janossy} recently shed light on some limitations of the DeepSets architecture, and proposed to improve it by replacing the sum fed to the outer network by another pooling techinque called \emph{Janossy pooling}. Since the drawbacks they inspect are also likely to affect our architectures, extending Janossy pooling to the PEN framework might constitute a valuable improvement.


Our experiments show that the performance of the MLP networks using different choices for the number of weights is quite similar, and that PEN outperforms MLP even when MLP has access to a larger number of weights compared to PEN.
The main insight is that PENs by design incorporate the (partial) exchangeability property of the data, whereas the MLPs have to learn this property.
Exchangeability and partial exchangeability can in principle be expressed
in an MLP, but for small data sets these properties will be difficult to learn, and we expect that the model will overfit to the training data. One approach to alleviate this problem for MLPs is to perform data augmentation. However, it is not straightforward to perform data augmentation for continuous Markovian data, unless we have access to the underlying data generating process. In ABC the assumption is that we do have access to this process, but data generation may be computational expensive, and in a more general application we may not have access to the process.



Although we have applied the PEN architecture to the problem of learning summary statistics for ABC, notice that PEN is a general architecture and could be used for other applications. One example would be time series classification.

The main limitation for PEN is that it is designed for Markovian data or, when considering the special case of DeepSets (i.e. PEN-0), for exchangeable data. However, in the MA(2) example we achieve good inference results even though the MA(2) model is itself non-Markovian \textit{and} observations are perturbed with measurement noise.


\section*{Acknowledgements}

Research was partially supported by the Swedish Research Council (VR grant 2013-05167). We would also like to thank Joachim Hein and colleagues at LUNARC, Lund University, for helping out on setting up the GPU environment used for the simulations.

\bibliography{references}
\bibliographystyle{icml2019}

\clearpage
\onecolumn
\appendix
\section{Supplementary Material} \label{sec:suppl_mat}

\subsection{Approximate Bayesian computation rejection sampling}

\subsubsection{Settings for ABC rejection sampling ``reference table'' algorithm}
In section 2 of the main paper we denote with $x$ the ABC threshold.
For g-and-k and $\alpha$-stable models we consider for $x$ the $0.1$th percentile, and for AR(2) and MA(2) the $0.02$th percentile of all distances. The number of proposals for g-and-k and $\alpha$-stable models is $\tilde{N}=100,000$, and for  AR(2) and MA(2) $\tilde{N}=500,000$.

\subsubsection{The ABC distance function}
In all our inference attempts we always used ABC rejection sampling and only needed to change the method used to compute the summary statistics.
We employed the Mahalanobis distance
\begin{equation*}
\Delta(s^*, s^{\mathrm{obs}}) = \sqrt{(s^*-s^{\mathrm{obs}})^{\transpose}A(s^*-s^{\mathrm{obs}})},
\end{equation*}
where in our case $A$ is the identity matrix, except when using \textit{hand-picked} summary statistics for the g-and-k distribution, and in such case $A$ is a diagonal matrix with diagonal elements $1/w^2$, with $w$ a vector with entries $w = [0.22; 0.19; 0.53; 2.97; 1.90]$, as in \cite{picchini2017approximate}.

\subsection{Regularization}

We use early-stopping for all networks. The early-stopping method used is to train the network over $N$ epochs and then select the set of weights, out of the $N$ sets, that generated the lowest evaluation error.

\subsection{g-and-k distribution}

\begin{itemize}
    \item The full set of parameters for a g-and distribution is $[A,B,g,k,c]$, However, we follow the common practice of keeping $c$ fixed to  $c = 0.8$ and assume $B>0$ and $k\geq 0$ \cite{prangle2017gk}.
    \item Here is a procedure to simulate a single draw from the distribution: we first simulate a draw $z$ from a standard Gaussian distribution, $z \sim N(0,1)$, then we plug $z$ into
\begin{equation*}
    Q = A + B\cdot(1+c\cdot \tanh(g\cdot z/2))\cdot z\cdot(1+z^2)^k
\end{equation*}
and obtain a realization $Q$ from a g-and-k distribution.
    \item The network settings are presented in Table \ref{tab:networksettings_mlp_small_gandk}, \ref{tab:networksettings_mlp_large_gandk}, \ref{tab:networksettings_mlp_dnn_pre_gandk}, and \ref{tab:networksettings_deepsets_gandk};
    \item The number of weights for the different networks are presented in Table \ref{tab:gank_nbr_w};
    \item Values outside of the range $[-10,50]$ are considered to be outliers and these values are replaced (at random) with values inside the data range. The data cleaning scheme is applied to both the observed and generated data;
    \item When computing the empirical distribution function we evaluate this function over 100 equally spaced points between 0 and 50;
    \item Number of training observations: $5\cdot10^5$, $10^5$, $10^4$, and $10^3$. Evaluation data observations $5 \cdot 10^3$. 
\end{itemize}

\begin{table}[ht]
      \caption{g-and-k: Network settings for MLP small. \textcolor{white}{some extra text ;)} }
          \label{tab:networksettings_mlp_small_gandk}
      \centering
        \begin{tabular}{llll}
        Layer &  Dim. in & Dim. out  & Activation  \\ \midrule
        Input &  1000 & 25  & relu  \\
        Hidden 1 &  25 & 25  & relu \\
        Hidden 2 &  25 & 12  & relu  \\
        Output &  12 & 4  & linear  \\
        \bottomrule
        \end{tabular}
\end{table}
\begin{table}[ht]
      \caption{g-and-k: Network settings for MLP large. \textcolor{white}{some extra text ;)} }
          \label{tab:networksettings_mlp_large_gandk}
      \centering
        \begin{tabular}{llll}
        Layer &  Dim. in & Dim. out  & Activation  \\ \midrule
        Input &  1000 & 100  & relu  \\
        Hidden 1 &  100 & 100  & relu \\
        Hidden 2 &  100 & 50  & relu  \\
        Output &  50 & 4  & linear  \\
        \bottomrule
        \end{tabular}
\end{table}
\begin{table}[ht]
      \centering
        \caption{g-and-k: Network settings MLP pre}
        \label{tab:networksettings_mlp_dnn_pre_gandk}
        \begin{tabular}{llll}
        Layer &  Dim. in & Dim. out  & Activation  \\ \midrule
        Input &  100 & 100  & relu  \\
        Hidden 1 &  100 & 100  & relu \\
        Hidden 2 &  100 & 50  & relu  \\
        Output &  50 & 4  & linear  \\
        \bottomrule
        \end{tabular}

\end{table}

\begin{table}[ht]
        \centering
        \caption{g-and-k: Network settings for PEN-0}
        \label{tab:networksettings_deepsets_gandk}
        \begin{tabular}{llll}
        $\phi$ network &    &    &   \\ \midrule
        Layer &  Dim. in & Dim. out  & Activation  \\ \midrule
        Input &  1 & 100  & relu  \\
        Hidden 1 &  100 & 50  & relu \\
        Output &  50 & 10  & linear  \\  \rule{0pt}{4ex}

        $\rho$ network &    &    &   \\ \midrule
        Layer &  Dim. in & Dim. out  & Activation  \\ \midrule
        Input &  10 & 100  & relu  \\
        Hidden 1 &  100 & 100  & relu \\
        Hidden 2 &  100 & 50  & relu  \\
        Output &  50 & 4  & linear  \\
        \bottomrule
        \end{tabular}
\end{table}

\begin{table}[ht]
\caption{g-and-k: Number of weights for the different networks }
          \label{tab:gank_nbr_w}
      \centering
        \begin{tabular}{ll}
        Network &  \# weights \\ \midrule
        MLP small & 26039 \\
        MLP large &  115454 \\
        MLP pre & 25454 \\
        PEN-0 & 22214 \\
        \bottomrule
        \end{tabular}
\end{table}

\subsection{$\alpha$-stable distribution}

\begin{itemize}

\item The characteristic function $\varphi(x)$ for the $\alpha$-stable distribution is given by \cite{ong2018variational}

\begin{align*}
    \varphi(x) = \begin{cases} \exp \Big( i \delta t - \gamma^{\alpha} \lvert t \rvert^{\alpha} \big(1 + i \beta \tan \frac{\pi \alpha}{2}\text{sgn}(t)( \lvert \gamma t \rvert^{1-\alpha}-1)\big) \Big), \ \alpha \neq 1, \\
    \exp\Big(  i \delta t - \gamma \lvert t \rvert \big(1 + i \beta \frac{2}{\pi}\text{sgn}(t)\log(\gamma  \lvert t \rvert )\big) \Big), \ \alpha = 1,
    \end{cases}
\end{align*}
where $\text{sgn}$ is the sign function, i.e.
\begin{align*}
    \text{sgn}(t) = \begin{cases}  -1 \ &\text{if}  \ t < 0, \\
    0 \ &\text{if}  \ t = 0, \\
    1 \ &\text{if}  \ t > 0. \end{cases}
\end{align*}

    \item The network settings are presented in Table \ref{tab:networksettings_mlp_small_alpha}, \ref{tab:networksettings_mlp_large_alpha}, \ref{tab:networksettings_mlp_dnn_pre_alpha}, and \ref{tab:networksettings_deepsets_alpha};
    \item The number of weights for the different networks are presented in Table \ref{tab:alpha_nbr_w};
    \item Values outside of the range $[-10,50]$ are considered to be outliers and these values are replaced (at random) with values inside the data range. The data cleaning scheme is applied to both the observed and generated data;
    \item All data sets are standardized using the ``robust scalar'' method, i.e. each data point $y_i$ is standardized according to

    \begin{align*}
        \frac{y_i + Q_1(y)}{Q_3(y)-Q_1(y)}
    \end{align*}
    where $Q_1$ and $Q_3$ are the first and third quantiles respectively;

    \item When computing the empirical distribution function we evaluate this function over 100 equally spaced points between -10 and 100;
    \item The root-mean-squared error (RMSE) is computed as

    \begin{align*}
        \text{RMSE} = \sqrt{\frac{1}{R} \sum_{i=1}^{R}\{ (\hat{\theta}^1_i - \theta^1)^2 + (\hat{\theta}^2_i - \theta^2)^2 + (\hat{\theta}^3_i - \theta^3)^2 + (\hat{\theta}^4_i - \theta^4)^2\}},
    \end{align*}
    where $\theta = [\theta^1, \theta^2, \theta^3, \theta^4]$ are ground-truth parameter values and  $[\hat{\theta}^1_i, \hat{\theta}^2_i, \hat{\theta}^3_i, \hat{\theta}^4_i]_{1 \leq i \leq R}$ are ABC posterior means. $R$ is the number of independent repetitions of the inference procedure;

    \item Number of training observations: $5\cdot10^5$, $10^5$, $10^4$, and $10^3$. Evaluation data observations $5 \cdot 10^3$. 

\end{itemize}

\begin{table}[ht]
      \caption{$\alpha$-stable: Network settings for MLP small. \textcolor{white}{some extra text ;)} }
          \label{tab:networksettings_mlp_small_alpha}
      \centering
        \begin{tabular}{llll}
        Layer &  Dim. in & Dim. out  & Activation  \\ \midrule
        Input &  1002 & 25  & relu  \\
        Hidden 1 &  25 & 25  & relu \\
        Hidden 2 &  25 & 12  & relu  \\
        Output &  12 & 4  & linear  \\
        \bottomrule
        \end{tabular}
\end{table}
\begin{table}[ht]

      \caption{$\alpha$-stable: Network settings for MLP large. \textcolor{white}{some extra text ;)} }
          \label{tab:networksettings_mlp_large_alpha}
      \centering
        \begin{tabular}{llll}
        Layer &  Dim. in & Dim. out  & Activation  \\ \midrule
        Input &  1002 & 100  & relu  \\
        Hidden 1 &  100 & 100  & relu \\
        Hidden 2 &  100 & 50  & relu  \\
        Output &  50 & 4  & linear  \\
        \bottomrule
        \end{tabular}
\end{table}
\begin{table}[ht]
      \centering
        \caption{$\alpha$-stable: Network settings MLP pre.}
        \label{tab:networksettings_mlp_dnn_pre_alpha}
        \begin{tabular}{llll}
        Layer &  Dim. in & Dim. out  & Activation  \\ \midrule
        Input &  100 & 100  & relu  \\
        Hidden 1 &  100 & 100  & relu \\
        Hidden 2 &  100 & 50  & relu  \\
        Output &  50 & 4  & linear  \\
        \bottomrule
    \end{tabular}
\end{table}
\begin{table}[ht]
        \centering
        \caption{$\alpha$-stable: Network settings for PEN-0.}
        \label{tab:networksettings_deepsets_alpha}
        \begin{tabular}{llll}
        $\phi$ network &    &    &   \\ \midrule
        Layer &  Dim. in & Dim. out  & Activation  \\ \midrule
        Input &  1 & 100  & relu  \\
        Hidden 1 &  100 & 50  & relu \\
        Output &  50 & 20  & linear  \\  \rule{0pt}{4ex}

        $\rho$ network &    &    &   \\ \midrule
        Layer &  Dim. in & Dim. out  & Activation  \\ \midrule
        Input &  22 & 100  & relu  \\
        Hidden 1 &  100 & 100  & relu \\
        Hidden 2 &  100 & 50  & relu  \\
        Output &  50 & 4  & linear  \\
        \bottomrule
        \end{tabular}
\end{table}

\begin{table}[ht]
\caption{$\alpha$-stable: Number of weights for the different networks }
          \label{tab:alpha_nbr_w}
      \centering
        \begin{tabular}{ll}
        Network &  \# weights \\ \midrule
        MLP small & 26089  \\
        MLP large & 115654 \\
        MLP pre &  25454 \\
        PEN-0 &  23924 \\
        \bottomrule
        \end{tabular}
\end{table}

\subsection{Autoregressive time series model}

\begin{itemize}
\item The network settings are presented in Table \ref{tab:networksettings_ar2_small_dnn}, \ref{tab:networksettings_ar2_large_dnn},  \ref{tab:networksettings_ar2_mlp_pen0}, and \ref{tab:networksettings_ar2_mlp_pen2};
  \item The number of weights for the different networks are presented in Table \ref{tab:ar_nbr_w};
 \item Number of training observations: $10^6$, $10^5$, $10^4$, and $10^3$. Evaluation data observations $10^4$. 
\end{itemize}

\begin{table}[ht]
      \caption{AR(2): Network settings for MLP small.}
     \label{tab:networksettings_ar2_small_dnn}
      \centering
        \begin{tabular}{llll}
        Layer &  Dim. in & Dim. out  & Activation  \\ \midrule
        Input &  100 & 55  & relu  \\
        Hidden 1 &  55 & 55  & relu \\
        Hidden 2 &  55 & 25  & relu  \\
        Output &  25 & 2  & linear  \\
        \bottomrule
        \end{tabular}
\end{table}
\begin{table}[ht]
      \caption{AR(2): Network settings for MLP large.}
     \label{tab:networksettings_ar2_large_dnn}
      \centering
        \begin{tabular}{llll}
        Layer &  Dim. in & Dim. out  & Activation  \\ \midrule
        Input &  100 & 100  & relu  \\
        Hidden 1 &  100 & 100  & relu \\
        Hidden 2 &  100 & 50  & relu  \\
        Output &  50 & 2  & linear  \\
        \bottomrule
        \end{tabular}
\end{table}
\begin{table}[ht]
        \centering
        \caption{AR(2): Network settings for PEN-0.}
        \label{tab:networksettings_ar2_mlp_pen0}
        \begin{tabular}{llll}
        $\phi$ network &    &    &   \\ \midrule
        Layer &  Dim. in & Dim. out  & Activation  \\ \midrule
        Input &  1 & 100  & relu  \\
        Hidden 1 &  100 & 50  & relu \\
        Output &  50 & 10  & linear  \\ \rule{0pt}{4ex}

        $\rho$ network &    &    &   \\ \midrule
        Layer &  Dim. in & Dim. out  & Activation  \\ \midrule
        Input &  10 & 50  & relu  \\
        Hidden 1 &  50 & 50  & relu \\
        Hidden 2 &  50 & 20  & relu  \\
        Output &  20 & 2  & linear  \\
        \bottomrule
        \end{tabular}
\end{table}
\begin{table}[ht]
        \centering
        \caption{AR(2): Network settings for PEN-2.}
        \label{tab:networksettings_ar2_mlp_pen2}
        \begin{tabular}{llll}
        $\phi$ network &    &    &   \\ \midrule
        Layer &  Dim. in & Dim. out  & Activation  \\ \midrule
        Input &  3 & 100  & relu  \\
        Hidden 1 &  100 & 50  & relu \\
        Output &  50 & 10  & linear  \\ \rule{0pt}{4ex}

        $\rho$ network &    &    &   \\ \midrule
        Layer &  Dim. in & Dim. out  & Activation  \\ \midrule
        Input &  12 & 50  & relu  \\
        Hidden 1 &  50 & 50  & relu \\
        Hidden 2 &  50 & 20  & relu  \\
        Output &  20 & 2  & linear  \\
        \bottomrule
        \end{tabular}
\end{table}

\begin{table}[ht]
\caption{AR(2): Number of weights for the different networks }
          \label{tab:ar_nbr_w}
      \centering
        \begin{tabular}{ll}
        Network &  \# weights \\ \midrule
        MLP small &  10087 \\
        MLP large &  25352 \\
        PEN-0 &  9922 \\
        PEN-2 &  10222 \\
        \bottomrule
        \end{tabular}
\end{table}

\subsection{Moving average time series with observational noise model}

\begin{itemize}
\item The network settings are presented in Table \ref{tab:networksettings_ma2_mlp_small}, \ref{tab:networksettings_ma2_mlp_large}, \ref{tab:networksettings_ma2_mlp_pen0}, and \ref{tab:networksettings_ma2_mlp_pen10};
  \item The number of weights for the different networks are presented in Table \ref{tab:ma_nbr_w};
 \item Number of training observations: $10^6$, $10^5$, $10^4$, and $10^3$. Evaluation data observations $5 \cdot 10^5$.  
\end{itemize}

\begin{table}[ht]
      \caption{MA(2): Network settings for MLP small.}
    \label{tab:networksettings_ma2_mlp_small}
      \centering
        \begin{tabular}{llll}
        Layer &  Dim. in & Dim. out  & Activation  \\ \midrule
        Input &  100 & 60  & relu  \\
        Hidden 1 &  60 & 60  & relu \\
        Hidden 2 &  60 & 25  & relu  \\
        Output &  25 & 2  & linear  \\
        \bottomrule
        \end{tabular}
\end{table}
\begin{table}[ht]
      \caption{MA(2): Network settings for MLP large.}
    \label{tab:networksettings_ma2_mlp_large}
      \centering
        \begin{tabular}{llll}
        Layer &  Dim. in & Dim. out  & Activation  \\ \midrule
        Input &  100 & 100  & relu  \\
        Hidden 1 &  100 & 100  & relu \\
        Hidden 2 &  100 & 50  & relu  \\
        Output &  50 & 2  & linear  \\
        \bottomrule
        \end{tabular}
\end{table}
\begin{table}[ht]
        \centering
        \caption{MA(2): Network settings for PEN-0.}
        \label{tab:networksettings_ma2_mlp_pen0}
        \begin{tabular}{llll}
        $\rho$ network &    &    &   \\ \midrule
        Layer &  Dim. in & Dim. out  & Activation  \\ \midrule
        Input &  1 & 100  & relu  \\
        Hidden 1 &  100 & 50  & relu \\
        Hidden 2 &  50 & 10  & relu  \\

        $\phi$ network &    &    &   \\ \midrule
        Layer &  Dim. in & Dim. out  & Activation  \\ \midrule
        Input &  10 & 50  & relu  \\
        Hidden 1 &  50 & 50  & relu \\
        Hidden 2 &  50 & 20  & relu  \\
        Output &  20 & 2  & linear  \\
        \bottomrule
        \end{tabular}
\end{table}
\begin{table}[ht]
        \centering
        \caption{MA(2): Network settings for PEN-10}
        \label{tab:networksettings_ma2_mlp_pen10}
        \begin{tabular}{llll}
        $\rho$ network &    &    &   \\ \midrule
        Layer &  Dim. in & Dim. out  & Activation  \\ \midrule
        Input &  11 & 100  & relu  \\
        Hidden 1 &  100 & 50  & relu \\
        Hidden 2 &  50 & 10  & relu  \\

        $\phi$ network &    &    &   \\ \midrule
        Layer &  Dim. in & Dim. out  & Activation  \\ \midrule
        Input &  20 & 50  & relu  \\
        Hidden 1 &  50 & 50  & relu \\
        Hidden 2 &  50 & 20  & relu  \\
        Output &  20 & 2  & linear  \\
        \bottomrule
        \end{tabular}
\end{table}

\begin{table}[ht]
\caption{MA(2): Number of weights for the different networks }
          \label{tab:ma_nbr_w}
      \centering
        \begin{tabular}{ll}
        Network &  \# weights \\ \midrule
        MLP small & 11297   \\
        MLP large & 25352   \\
        PEN-0 & 9922 \\
        PEN-10 & 11422 \\
        \bottomrule
        \end{tabular}
\end{table}

\end{document}